\newcommand{\RightComment}[1]{\unskip\hfill\(\triangleright\)~#1}
\theoremstyle{plain}
\newtheorem{theorem}{Theorem}[section]
\newtheorem{proposition}[theorem]{Proposition}
\newtheorem{lemma}[theorem]{Lemma}
\theoremstyle{definition}
\theoremstyle{remark}
\DeclareMathOperator{\RR}{\mathbb{R}}
\DeclareMathOperator{\E}{\mathbb{E}}
\let\subparagraph\paragraph
\titlespacing{\section}{0pt}{1ex}{1ex}
\titlespacing{\subsection}{0pt}{1ex}{0.5ex}
\titlespacing{\subsubsection}{0pt}{0.8ex}{0.5ex}
\newcommand{\sectionfontsize}{\fontsize{13pt}{15pt}\selectfont}
\newcommand{\subsectionfontsize}{\fontsize{12pt}{14pt}\selectfont}
\begin{document}
\pagestyle{fancy}
\fancyhf{}              
\fancyfoot[c]{\rm\thepage}
%

%
\runningauthor{Samaddar, Sun, Nilsson, Madireddy}
\twocolumn[

\aistatstitle{Efficient Flow Matching Using Latent Variables}

\aistatsauthor{ Anirban Samaddar \And Yixuan Sun}

\aistatsaddress{ Argonne National Laboratory \And  Argonne National Laboratory}

\aistatsauthor{Viktor Nilsson \And  Sandeep Madireddy}

\aistatsaddress{KTH Royal Institute of Technology \And  Argonne National Laboratory}
]

\begin{abstract}
Flow matching models have shown great potential in image generation tasks among probabilistic generative models. However, most flow matching models in the literature do not explicitly utilize the underlying clustering structure in the target data when learning the flow from a simple source distribution like the standard Gaussian. This leads to inefficient learning, especially for many high-dimensional real-world datasets, which often reside in a low-dimensional manifold. 
To this end, we present \texttt{Latent-CFM}, which provides efficient training strategies by conditioning on the features extracted from data using pretrained deep latent variable models. Through experiments on synthetic data from multi-modal distributions and widely used image benchmark datasets, we show that \texttt{Latent-CFM} exhibits improved generation quality with significantly less training and computation than state-of-the-art flow matching models by adopting pretrained lightweight latent variable models. Beyond natural images, we consider generative modeling of spatial fields stemming from physical processes. Using a 2d Darcy flow dataset, we demonstrate that our approach generates more physically accurate samples than competing approaches. In addition, through latent space analysis, we demonstrate that our approach can be used for conditional image generation conditioned on latent features, which adds interpretability to the generation process.
\end{abstract}

\section{Introduction}

Flow Matching (FM) is a generative modeling framework that directly learns a vector field that smoothly transports a simple source distribution to a target data distribution \cite{lipman2023flow}.
Compared to widely adopted diffusion generative models \cite{song2020score, ho2020denoising}, FM provides a framework for building deterministic
transport paths for mapping a simple noise distribution into a data distribution.
In fact, the FM framework can also map between two arbitrary distributions \cite{liu2022flow, albergo2023stochastic}.
The FM transport paths may be constructed to promote beneficial properties such as shortness and straightness of paths, providing significant computational benefits \cite{tong2024improving}.
FM has been shown to be applicable beyond Euclidean spaces, and extensible to incorporate optimal transport principles and novel conditioning mechanisms to improve sample quality and expressivity \cite{tong2020trajectorynet, albergo2023stochastic, tong2024improving}. These developments have facilitated applications in various domains, including foundation models for video generation \cite{polyak2025moviegencastmedia}, molecular modeling \cite{hassan2024etflowequivariantflowmatchingmolecular}, and discrete data generation \cite{gat2024discreteflowmatching}. 

Despite these strengths, a limitation of current FM models stems from the fact that the prior knowledge about the underlying structures of the target data (such as low-dimensional clusters) is not explicitly included in the modeling, which can potentially lead to inefficiency in the FM training and convergence. 
One component of FM that could incorporate such information is the source distribution. 
Typically, an isotropic Gaussian \cite{lipman2023flow, tong2024improving} is used as the source, in an independent coupling with the target. Only recently, FlowLLM \cite{sriram2024flowllm} modified the source distribution as the Large Language Model generated response that FM subsequently refines to learn a transport to the target distribution. However, a data-driven source for high-dimensional image datasets is challenging to learn since transporting a custom source distribution to the target requires specialized loss functions and sampling processes \cite{daras2022soft, wang2023patch}.
On the other hand, very few works have explored ways to incorporate the underlying clustering structure of the data \cite{jia2024structured, guo2025variational} during training. These works condition the flow from source to target distribution using latent variables. 
However, these works often lead to suboptimal performance for high-dimensional datasets while requiring customized training strategies that are dataset-dependent and expensive, thus restricting their broader applicability.

To address these limitations, we adapt advances in deep latent variable modeling \cite{kingma2022autoencodingvariationalbayes} to FM, and propose a simple and efficient training and inference framework to incorporate data structure in the generation process. Our contributions are as follows --

\begin{itemize}[noitemsep,topsep=0pt,leftmargin=*]

    \item We propose \texttt{Latent-CFM}, a FM training framework that efficiently incorporates and finetunes lightweight deep latent-variable models, enabling conditional FM networks to capture and leverage low-dimensional clustering structures of the data efficiently.

    \item We demonstrate the effectiveness of the proposed framework in significantly improving generation quality 
    and training efficiency (up to $50\%$ fewer steps) compared to popular flow matching approaches in 2d synthetic mixture and popular image benchmark datasets such as MNIST, CIFAR10, and ImageNet. 

    \item We show the superiority of \texttt{Latent-CFM} in generating physically consistent data with experiments on the 2d Darcy flow dataset, where consistency is measured through the residual of the governing partial differential equation, in contrast to the natural image datasets. 

    \item We explore the ability of our method in feature-conditional generation and extend it to compositional generation from the product of two feature-conditioned distributions. This allows generation-based low-frequency features learned from compressed latent representations, allowing regeneration of images with similar characteristics; see Fig.~\ref{fig:cifar_style}.
\end{itemize}

\section{Related works}
Continuous flow-based generative models, like diffusion and flow matching, evolve samples from a (typically) simple source distribution to a complex target distribution \cite{song2020score,lipman2023flow, tong2024improving}. The most common choice for source distribution is Gaussian white noise. 
Many works have considered conditional generation based on these models \cite{dhariwal2021diffusion, ho2022classifierfreediffusionguidance, zheng2023guidedflowsgenerativemodeling}, e.g. based on class labels, or text captions (prompts), by incorporating this information in the input of the network.
This enables techniques such as classifier-free guidance, and typically leads to better overall performance.

\paragraph{Source sample structure} A different approach to generating samples based on structural information is to incorporate this in the source distribution. 
\cite{kollovieh2024flowmatchinggaussianprocess} attempts to model the source distribution as a Gaussian process (GP) to model the generation of time series data. In \cite{sriram2024flowllm}, the authors use a fine-tuned LLM to generate the source (noisy) samples to be transported by a discrete flow matching model for materials discovery.
\begin{figure*}
\centering
\hspace{-1cm}
\begin{subfigure}[b]{0.45\textwidth} 
\centering
\begin{subfigure}[b]{0.33\textwidth} 
    \centering
\includegraphics[width=\textwidth]{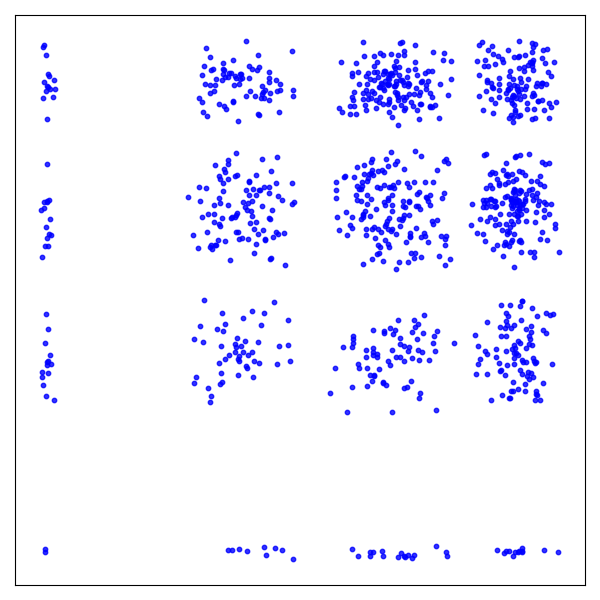}
    \caption{Data}
    \label{fig:test_sample1} 
\end{subfigure}
\begin{subfigure}[b]{0.33\textwidth} 
    \centering
\includegraphics[width=\textwidth]{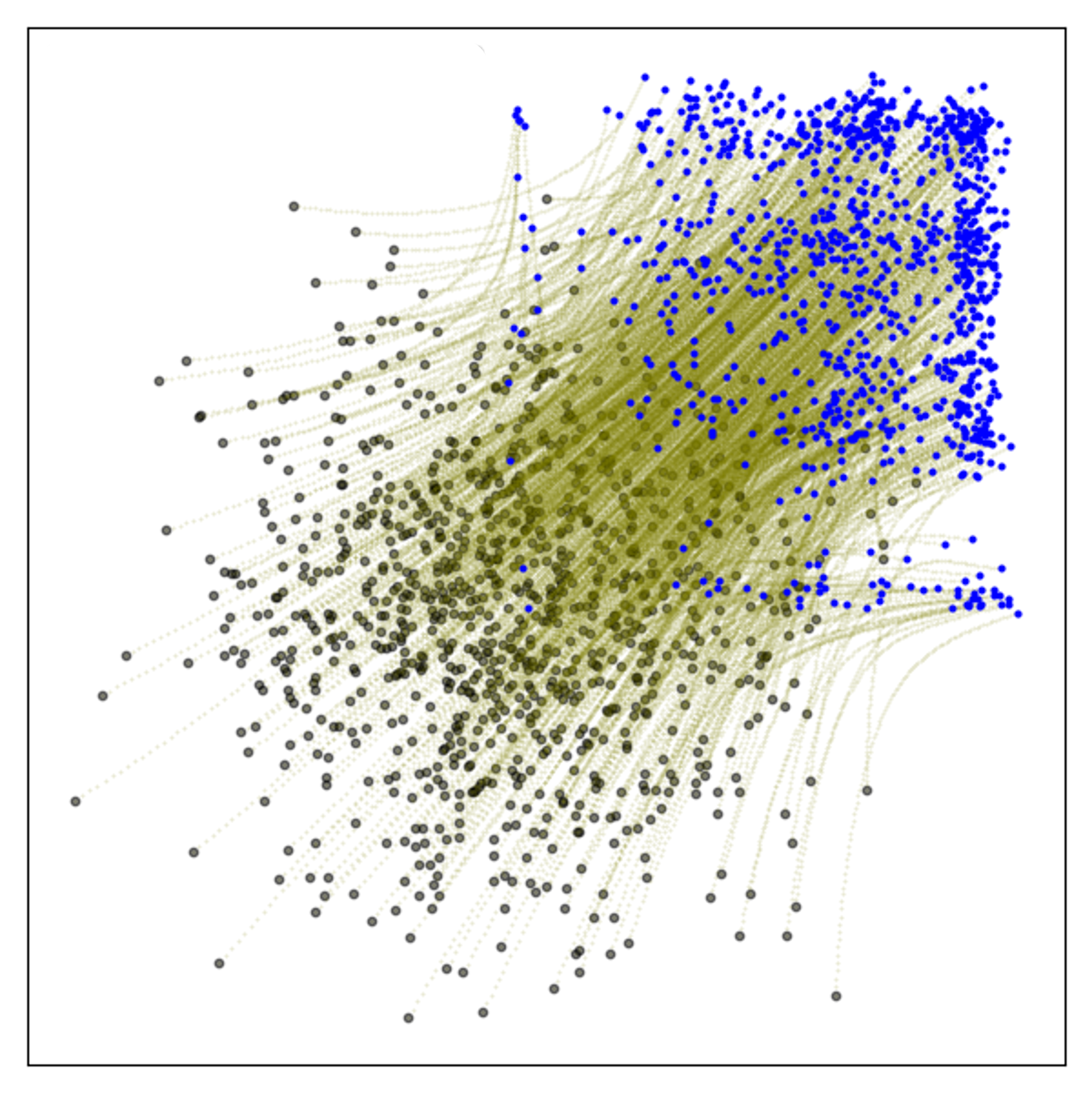}
    \caption{I-CFM}
    \label{fig:synthetic_ICFM1} 
\end{subfigure}
\hfil
\begin{subfigure}[b]{0.33\textwidth} 
    \centering
\includegraphics[width=\textwidth]{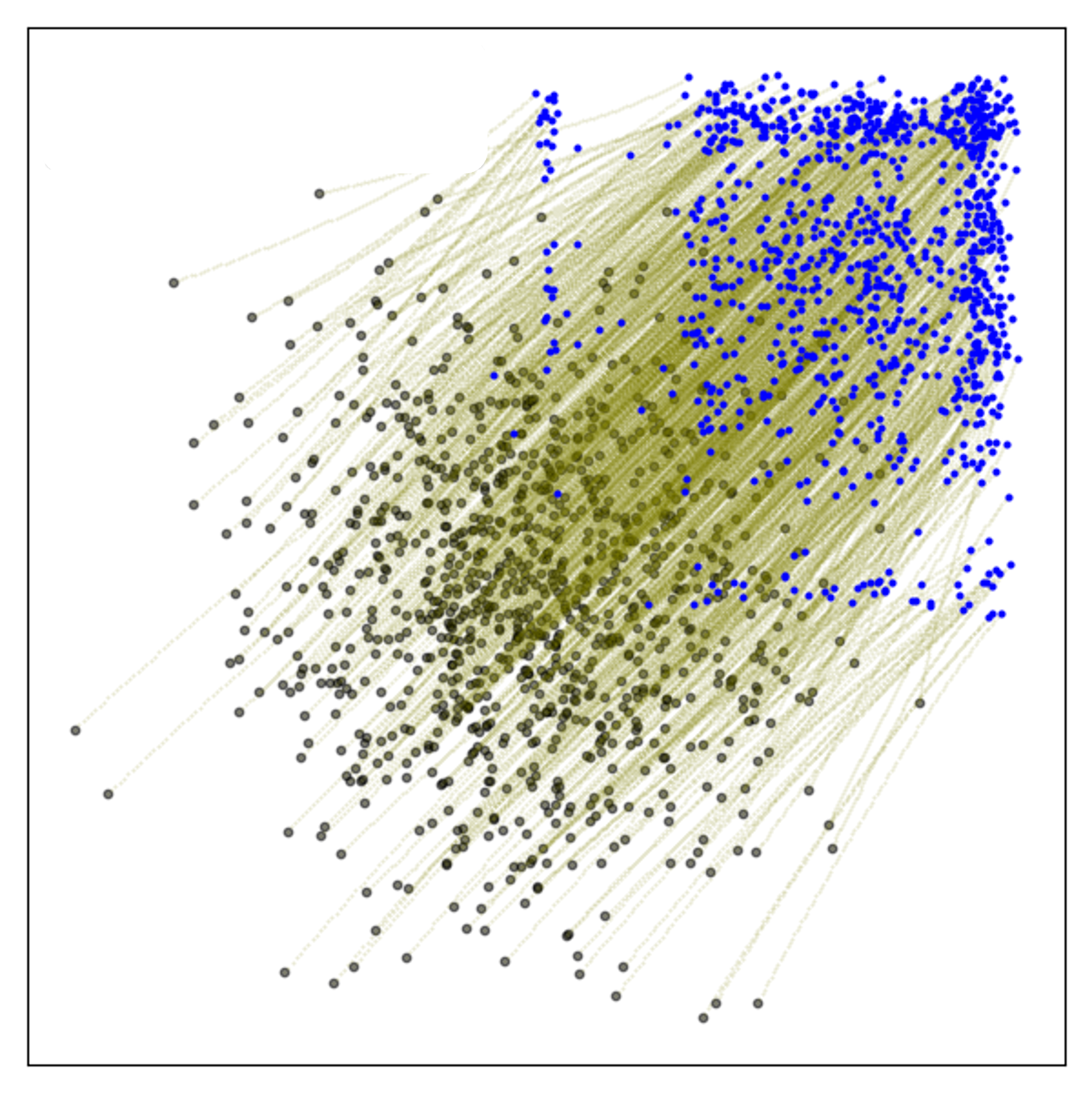}
    \caption{OT-CFM}
    \label{fig:synthetic_gaussianOT} 
\end{subfigure}
\begin{subfigure}[b]{0.33\textwidth} 
    \centering
\includegraphics[width=\textwidth]{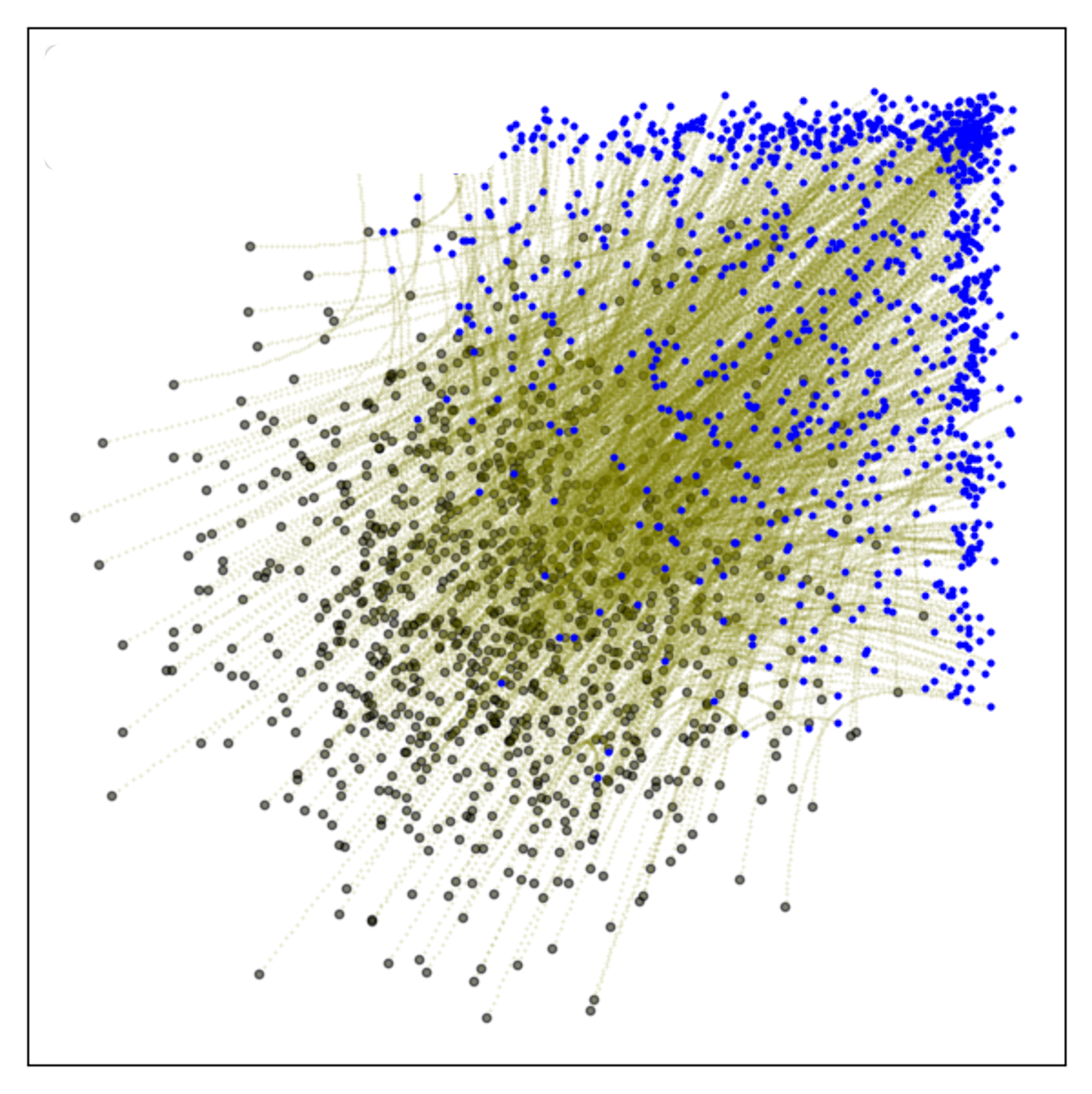}
    \caption{VRFM}
    \label{fig:synthetic_VRFM} 
\end{subfigure}
\end{subfigure}
\begin{subfigure}[b]{0.55\textwidth} 
    \centering
\includegraphics[width=0.8\textwidth]{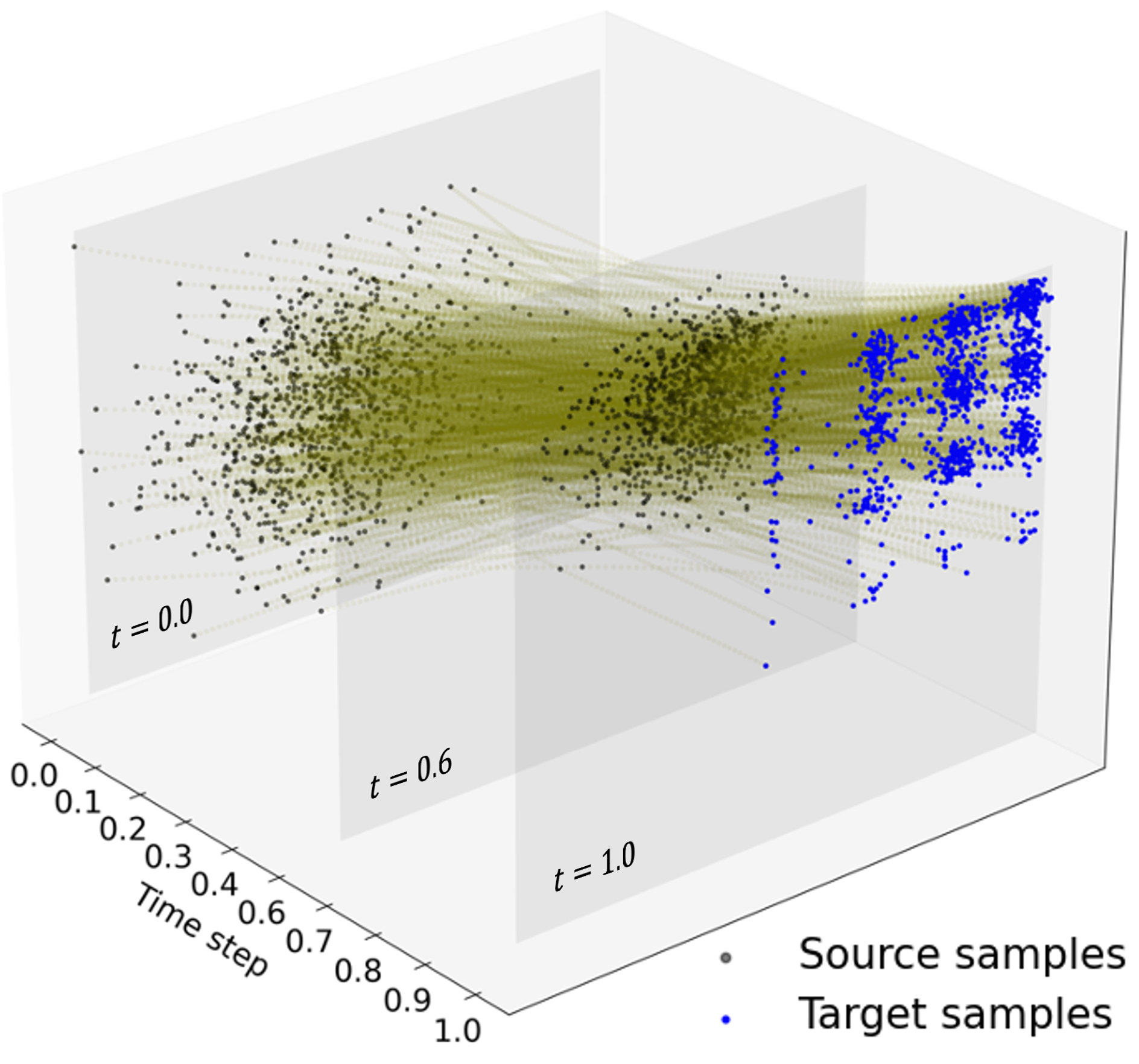}
    \caption{\texttt{Latent-CFM} (VAE)} 
    \label{fig:synthetic_2d_lcfm_vae} 
\end{subfigure}
    \caption{(a)-(d) showing generation quality of different flow-based generative models on 2d triangle dataset with 16 modes. Popular conditional flow matching approaches generated samples fail to capture the multi-modal structure of the target data distribution. (e) \texttt{Latent-CFM} generates samples similar to the data samples, capturing the multi-modal data structure.
    }
    \label{fig:synthetic_2d_cfm}
\end{figure*}
Diffusion Sch\"odinger bridges \cite{de2021diffusion} is an unsupervised framework for learning couplings $\pi$ between the source and target distributions such that target samples are close to their source, in accordance with some reference corruption process, the low-noise limit being optimal transport (OT) couplings.
Another line of work in this direction uses discrete OT-couplings computed on batches for training the flow network \cite{pooladian_multisample_2023, tong2024improving}.
This similarly promotes OT-like proximity between source and target samples.


\paragraph{Latent variables}
Recent studies \cite{guo2025variational, jia2024structured} in incorporating data structures in diffusion and flow matching models have focused on a modeling approach where the flow network is conditioned on a latent variable. In diffusion models, \cite{jia2024structured} proposed learning a Gaussian mixture model from data and conditioning the denoiser neural network with the learned cluster centers during training. The method shows promising results for 1-dimensional synthetic datasets but suboptimal results for high-dimensional datasets. In flow matching, \cite{guo2025variational} proposes to adapt deep latent variable models \cite{kingma2022autoencodingvariationalbayes} to cluster the conditional transport paths. The authors show promising results on high-dimensional datasets; however, the method demands expensive training. In Fig.~\ref{fig:synthetic_2d_cfm}, we show that the popular CFM models fail to capture the multi-modal data structure of the 2d triangle dataset with 16 modes (details in supplementary material).

In this study, we present \texttt{Latent-CFM}, a framework for incorporating low-dimensional structures of the target data into the training/inference process of conditional flow matching models. Our approach enables adapting the popular deep latent variable models \cite{kingma2022autoencodingvariationalbayes} for efficient training and high-quality sample generation.  In addition, our approach can generate samples conditioned on data features, adding interpretability to the generated samples, which is uncommon for the standard flow matching approaches. Fig.~\ref{fig:synthetic_2d_lcfm_vae} shows that our approach can generate samples capturing the multi-modal structure of the data.  

\section{\texttt{Latent-CFM}: Conditional Flow Matching with Latent variables}
This section describes our proposed method \texttt{Latent-CFM}. First, we describe the notations that will be followed throughout the manuscript.
\subsection{Background and Notations}
 We denote the unknown density of the data distribution over $\mathbb{R}^d$ by $p_1(x)$ and the source density, which is known and easy to sample from, by $p_0(x)$. Generative modeling involves finding a map from the simple source density $p_0(x)$ to the complex data distribution $p_1(x)$. We denote $x_0,$ and $x_1$ as the random variables following the distributions $p_0(x)$ and $p_1(x)$ respectively.



\paragraph{Probability flow ODE:}

A time-dependent vector field $u_t:[0,1]\times \mathbb{R}^d \rightarrow \mathbb{R}^d$ defines an ordinary differential equation, 
\begin{equation}\label{eq:vec_field}
    \frac{d \phi_t(x)}{dt} = u_t(\phi_t(x)); \phi_0(x) = x_0
\end{equation}
where $\phi_t(x)$ is the solution of the ODE or \textit{flow} with the initial condition in Eq~\ref{eq:vec_field}, and $u_t(.)$ (interchangeable with $u(.,t)$) is the ground-truth vector field that transports the samples from the source to the target distribution. We denote $p_t(.)$ as the generated probability path by Eq.~\ref{eq:vec_field}, with $p_0(.), p_1(.)$ as the source and target distribution, respectively.

\paragraph{Flow Matching:}
Flow matching involves learning the vector field $u_t(.)$ that generates the flow from source to target distribution using a neural network $v_\theta(.,t)$ by optimizing the loss:
\begin{equation}\label{eq:flow_matching}
    \mathcal{L}_{\text{FM}} = \E_{t, p_t(x)} \left[ ||v_\theta(x,t) - u_t(x)||_2^2 \right]
\end{equation}
Given the marginal probability path $p_t(x) = N(x|\mu_t,\sigma^2_tI)$, 
the ODE flow that generates the path is not unique. However, a simple choice of the flow is $\phi_t(\epsilon) = \mu_t + \sigma_t\epsilon; \epsilon\sim N(0,I)$. \cite[Theorem~3]{lipman2023flow} and \cite[Theorem~2.1]{tong2024improving} show that the unique vector field that generates the flow has the following form:
\begin{equation}\label{eq:normal_vec_field}
    u_t(x) = \frac{\sigma'_t}{\sigma_t}(x-\mu_t) + \mu'_t
\end{equation}
where, $\mu'_t = \frac{d\mu_t}{dt}; \sigma'_t = \frac{d\sigma_t}{dt}$.

\paragraph{Conditional Flow Matching:}
The probability path $p_t$ is unknown for general source and target distributions. \cite{lipman2023flow, tong2024improving} proposed conditional flow matching (CFM) where the probability path $p_t(.|x_0,x_1)$ and the vector field $u_t(.|x_0,x_1)$ is conditioned on the end-point samples $(x_0,x_1)$ drawn from the distribution $q(x_0,x_1)$. The marginal vector field $u_t$ and the probability path $p_t$ are given by:
\begin{equation}\label{eq:marginal_flow}
    p_t(x) = \int p_t(x|x_0,x_1) q(x_0,x_1) dx_0 dx_1
\end{equation}
\begin{equation}\label{eq:marginal_vfield}  
    u_t(x) = \int u_t(x|x_0,x_1) \frac{p_t(x|x_0,x_1)q(x_0,x_1)}{p_t(x)} dx_0 dx_1 
\end{equation}
Eq.~\ref{eq:marginal_flow} induces a mixture model on the marginal probability path $p_t(x)$ with the conditional probability paths $p_t(x|x_0,x_1)$ weighted according to the likelihood $q(x_0,x_1)$. Similarly the marginal vector field $u_t(x)$ is an weighted average of the conditional vector field $u_t(x|x_0,x_1)$ with the posterior likelihood $p_t(x_0,x_1|x) = \frac{p_t(x|x_0,x_1)q(x_0,x_1)}{p_t(x)}$ as weights.

Given that we know the conditional vector field $u_t(x|x_0,x_1)$, it is not possible to derive the marginal $u_t(x)$ since the denominator of the posterior $p_t(x_0,x_1|x)$ involves the intractable marginal probability path $p_t(x)$. Therefore,  \cite{lipman2023flow, tong2024improving} proposed conditional flow matching objective:
\begin{equation}\label{eq:cond_flow}
    \mathcal{L}_{\text{CFM}} = \E_{t,q(x_0,x_1),p_t(x|x_0,x_1)} \left[ ||v_\theta(x,t) - u_t(x|x_0,x_1)||_2^2 \right]
\end{equation}
The CFM objective requires samples from $p_t(x|x_0,x_1)$, and $q(x_0,x_1)$ and the target conditional vector field $u_t(x|x_0,x_1)$. 
\cite{lipman2023flow, tong2020trajectorynet} show that under mild conditions $\nabla_\theta \mathcal{L}_{\text{CFM}}(\theta) = \nabla_\theta \mathcal{L}_{\text{FM}}(\theta)$. Therefore, the learned vector field $v_{\theta^*}(x,t)$ that minimizes Eq.~\ref{eq:cond_flow} is also the minimizer of Eq.~\ref{eq:flow_matching}. As a consequence, we can directly solve Eq.~\ref{eq:vec_field} with replacing $u_t(.)$ by $v_{\theta^*}(.,t)$ to transport the source distribution samples to the target distribution.




\subsection{\texttt{Latent-CFM} framework}
A key ingredient in CFM training is to specify the conditioning distribution $q(x_0,x_1)$ to sample for the CFM training. There are several choices
, such as independent coupling \cite{tong2024improving}, optimal transport \cite{tong2020trajectorynet}, etc. Motivated by the deep latent variable models (LVMs) such as VAEs \cite{kingma2022autoencodingvariationalbayes}, we propose \texttt{Latent-CFM} that combines recent developments of LVMs with flow matching for improved generative modeling. 


\begin{figure}
    \centering
    \includegraphics[width=.9\linewidth]{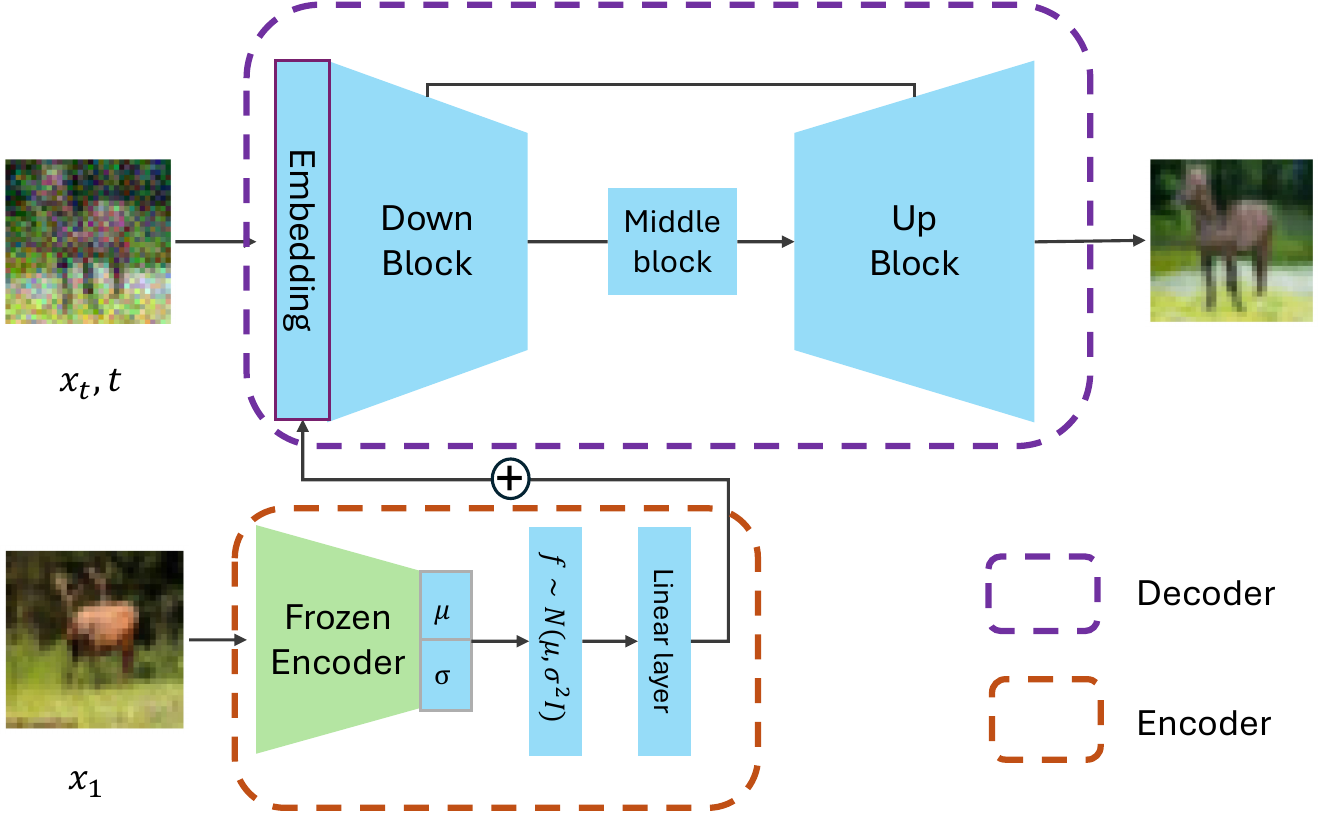}
    \caption{Schematic of \texttt{Latent-CFM} framework. Given a data $x_1$, \texttt{Latent-CFM} extracts latent features using a frozen encoder and a trainable stochastic layer. The features are embedded using a linear layer and added to the learned vector field. The framework resembles an encoder-decoder architecture like VAEs. }
    \label{fig:schematic}
\end{figure}

We implicitly model $q(x_0,x_1)$ as a mixture distribution with mixture weights given by a latent variable $f$ with density $q(f)$ defined over $\RR^k$ where $k\leq d$,
\begin{equation}\label{eq:framework}
    q(x_0,x_1) = \int q (f) q(x_0,x_1|f) df
\end{equation}
$q(x_0,x_1|f)$ is the likelihood of the conditioning distribution given the variables $f$. In this study, we model $p_1$ as 
independent of the source distribution $p_0$ where $p_1$ is assumed to be a mixture of conditional distributions conditioned on the latent random variable $f$,
\begin{equation}\label{eq:latent_cfm}
    q(x_0,x_1|f) = p_0(x_0) \times p_1(x_1|f)
\end{equation}



In Eq.~\ref{eq:latent_cfm}, the random variable $f$ represents latent variables that can be used to model the data distribution efficiently. 
Eq.~\ref{eq:framework} aligns with the manifold hypothesis, which states that many real-world datasets tend to concentrate on low-dimensional manifolds. We aim to learn the latent variables from the data and augment them to aid the generative modeling.

We introduce the \texttt{Latent-CFM} loss function,
\begin{align}\label{eq:latent_cond_flow}
    \mathcal{L}_{\texttt{Latent-CFM}} &= \E_{t,q(f),q(x_0,x_1|f),p_t(x|x_0,x_1)}  ||v_\theta (x,f,t) & \nonumber \\
    &- u_t(x|x_0,x_1)||_2^2,
\end{align}
where $q(x_0,x_1|f)$ is according to Eq.~\ref{eq:latent_cfm}. During sampling, we can generate samples from the latent distribution $f \sim q(f)$ and the source distribution $x_0 \sim p_0(x)$ and solve Eq.~\ref{eq:vec_field} replacing the vector field by $v_\theta (x,f,t)$ fixing $f$. Computing Eq.~\ref{eq:latent_cond_flow} requires us to sample from  $q(f)$, which is unobserved. Note that we can modify Eq.~\ref{eq:latent_cond_flow} into a more tractable objective,
\begin{align}\label{eq:latent_CFM_obj}
    \mathcal{L}_{\texttt{Latent-CFM}} &= \E_{t,q(x_0,x_1),q(f|x_0,x_1),p_t(x|x_0,x_1)}  ||v_\theta (x,f,t) & \nonumber \\ &- u_t(x|x_0,x_1)||_2^2 
\end{align}
We can apply Bayes theorem: $q(x_0,x_1)q(f|x_0,x_1) = q(f)q(x_0,x_1|f)$ to show the equivalence between Eq.~\ref{eq:latent_CFM_obj} and~\ref{eq:latent_cond_flow}. In Eq.~\ref{eq:latent_CFM_obj}, we can sample $(x_0,x_1) \sim q(x_0,x_1)$ and sample the posterior distribution $q(f|x_0,x_1)$ to compute the objective. Note that, the model in Eq.\ref{eq:latent_cfm} implies that $f$ is independent of the source $x_0$ and hence the posterior $q(f|x_0,x_1) = q(f|x_1)$. However, we keep the general notation $q(f|x_0,x_1)$, a more general model for the data where latent variables govern both source and target distributions. 
Proposition~\ref{prop:marginal-preserved} shows that if $v_\theta(x,f,t)$ has learned the minimum of $\mathcal{L}_{\texttt{Latent-CFM}}$, its flow generates the data distribution.


\subsubsection{Choice of $q(.|x_1)$}
The latent posterior distribution $q(.|x_1)$ should be easy to sample from and capture high-level structures in the data. 
In this study, we set $q(.|x_1)$ to be the popular variational autoencoders (VAE) \cite{kingma2022autoencodingvariationalbayes} for their success in disentangled feature extraction in high-dimensional datasets. The details about the VAEs are presented in the Appendix~\ref{sec:VAE}. In addition, we also explore Gaussian mixture models (GMM) \cite{pichler2022differentialentropyestimatortraining} for their efficient training on small-dimensional generative modeling, which we describe in the supplementary material.

In this study, we pretrain the VAEs optimizing the standard negative ELBO loss in Eq.~\ref{eq:loss_VAE} on the datasets and use the pretrained encoder $q_{\hat{\lambda}}(f|x_1)$ as a feature extractor. When using the encoder in \texttt{Latent-CFM}, we finetune the final layer (parameterized by $\lambda_{final}$) that outputs $(\mu, \log(\sigma))$ and we regularize its learning with a KL-divergence term added to Eq.~\ref{eq:latent_CFM_obj}:
\begin{align}\label{eq:latent_CFM_KL_obj}
    \mathcal{L}_{\texttt{Latent-CFM}} \leq & \E_{q(x_0,x_1)} \big[ \nonumber \\
    \E_{t,q(f|x_0,x_1),}&{}_{p_t(x|x_0,x_1)}  ||v_\theta (x,f,t) \nonumber 
    - u_t(x|x_0,x_1)||_2^2 & \nonumber \\ &+ \beta D_{KL}(q_{\lambda_{final}}(f|x_0,x_1) || p(f)) \big]
\end{align}
where, $p(f) = N(0,I)$. Eq.~\ref{eq:latent_CFM_KL_obj} is an upper bound of Eq.~\ref{eq:latent_CFM_obj} since KL-divergence is non-negative. 
In addition, we have empirically observed that the KL-divergence term resulted in the model learning beyond reconstruction of the data $x_1$ and increasing variability in unconditional generation. The loss in Eq.~\ref{eq:latent_CFM_KL_obj} is similar to VRFM loss \cite{guo2025variational} in Eq.~\ref{eq:mixture_vec}. However, in \texttt{Latent-CFM} the encoder $q_{\lambda_{final}}(.|x_0,x_1)$ only depends on the endpoints, which enables sample generation conditioned on the data features. Sec.~\ref{sec:vrfm} discusses the difference between the two loss functions in more detail.

Fig.~\ref{fig:schematic} shows the schematic of the \texttt{Latent-CFM} model. Given data $x_1$, it passes through the frozen encoder layer to output the latent variable $z$. The latent variables are then embedded through a linear layer and added to the neural network $v_\theta(.,.,z)$. The \texttt{Latent-CFM} model in Fig.~\ref{fig:schematic} can be viewed as an encoder-decoder (red and purple boxes) architecture where the encoder extracts the features and the decoder reconstructs the sample conditioned on the features. However, \texttt{Latent-CFM} learns to predict the vector field conditioned on the features from the encoder, which is integrated to reconstruct the final image.   






\subsubsection{Algorithm}
The training algorithm for \texttt{Latent-CFM} is described in Alg.~\ref{alg:vae-cond}. In this study, following I-CFM \cite{tong2024improving}, we adopt the conditional probability flow $p_t(x|x_0,x_1) = N(x|tx_1+(1-t)x_0,\sigma^2I)$, and the vector field $u_t(x|x_0,x_1) = x_1 - x_0$. We propose to pretrain a VAE model before the CFM training loop using the same training set. However, one can run a training loop where the VAE encoder and the vector field parameters are updated jointly at each step. We omit this training algorithm since it is similar to the VRFM \cite{guo2025variational} method with the key difference in choosing a static encoder $q_{\lambda}(.|x_0,x_1)$ whose parameters are trained along with the vector field. 
Finetuning the last layer also enables regularizing the information learned in the latent space through the KL term in \ref{eq:latent_CFM_KL_obj}.   

Alg.~\ref{alg:vae-cond_inference} describes the inference procedure. During inference, we need to draw samples from the estimated marginal distribution of the variables $\hat{p}(f) = \int p_1(x) q_{\hat{\lambda}}(f|x) dx$. For a moderately high-dimensional latent space, sampling from the marginal is difficult. Therefore, we reuse the empirical training samples $(x_1^{train},...,x_K^{train})$, for a given sample size $K$, to draw samples $f_i \sim q_{\hat{\lambda}}(f|x_i^{train})$ for all $i=1,...,K$.
\begin{figure}[t]
    \centering
    \includegraphics[width=\linewidth]{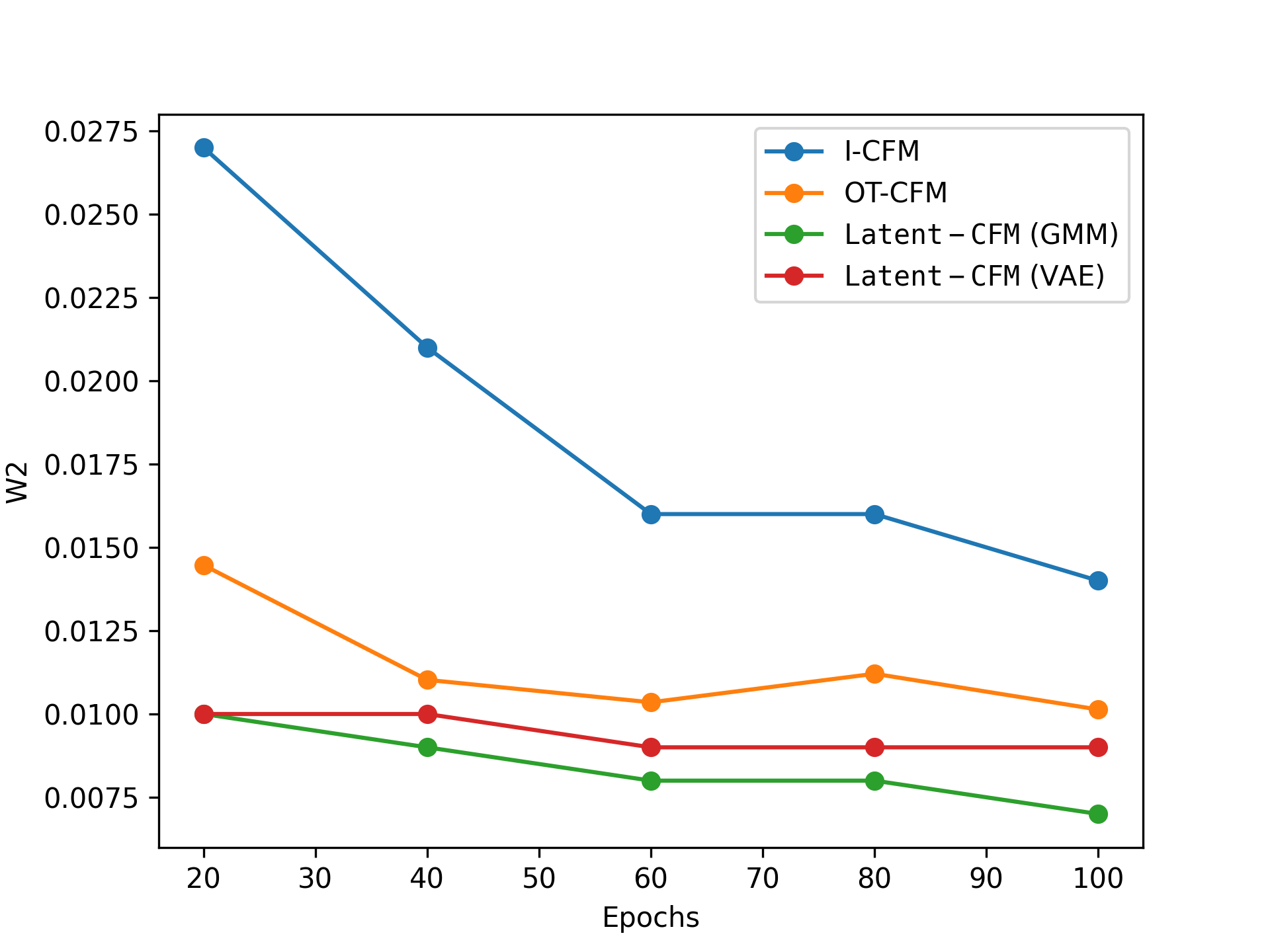}
    \caption{W2 vs epochs on 2d synthetic dataset}
    \label{fig:synthetic_eff}
\end{figure}

\section{Experiments}
We compare the proposed \texttt{Latent-CFM} against I-CFM~\cite{lipman2023flow}, OT-CFM~\cite{tong2024improving}, and VRFM~\cite{guo2025variational} on (a) unconditional data generation using synthetic 2d and high-dimensional image datasets and (b) physical data generation. We then analyze the learned latent space in the \texttt{Latent-CFM} model. Implementation details for all experiments are presented in the appendix.  



\subsection{Synthetic data sets}

We use the 2d Triangle dataset~\cite{pichler2022differentialentropyestimatortraining, nilssonremedi} to benchmark the models' generation quality.
The data distribution contains 16 modes (Fig.~\ref{fig:synthetic_2d_cfm}(a)) with different densities, and we generate 100K samples and divide them equally between the training and testing datasets.
\begin{table}[t]
\centering 
\begin{tabular}{|c|c|}
   \hline
    \textbf{Method} & \textbf{W2} ($\downarrow$) \\
	\hline
    OT-CFM & $0.010 \pm 0.0031$ \\
    I-CFM & $0.014 \pm 0.0066$ \\
	VRFM & $0.050 \pm 0.0344$ \\
    \texttt{Latent-CFM} (VAE)  & $\mathbf{0.009 \pm 0.0013}$  \\
    \texttt{Latent-CFM} (GMM)  & $\mathbf{0.007 \pm 0.0008}$ \\
   \hline
   \end{tabular}
   \caption{Wasserstein-2 distance between the generated samples from the models and the test samples on 2d Triangle datasets. The mean and the standard deviations are calculated across 5 random data density shapes. \texttt{Latent-CFM} shows the most similarity with the test samples.}
    \label{tab:w2_synthetic}
\end{table}

All evaluated models share the same neural network architecture for the learned vector field. For VRFM, we fix the latent dimension to  2 and the encoder architecture to be similar to the vector field network. For the \texttt{Latent-CFM}, we consider two variants differing in their pre-trained feature extractors: (1) a 16-component Gaussian mixture model (GMM), and (2) a continuous VAE with 2d latent space with $\beta=0.1$. The training/inference algorithms for \texttt{Latent-CFM} with GMM are described in the appendix.
For sampling, we have used the \texttt{dopri5} solver to solve the ODE in Eq.~\ref{eq:vec_field}. 
\begin{figure*}
    \centering
    \begin{subfigure}[b]{0.41\textwidth}
        \centering
        \includegraphics[width=\linewidth]{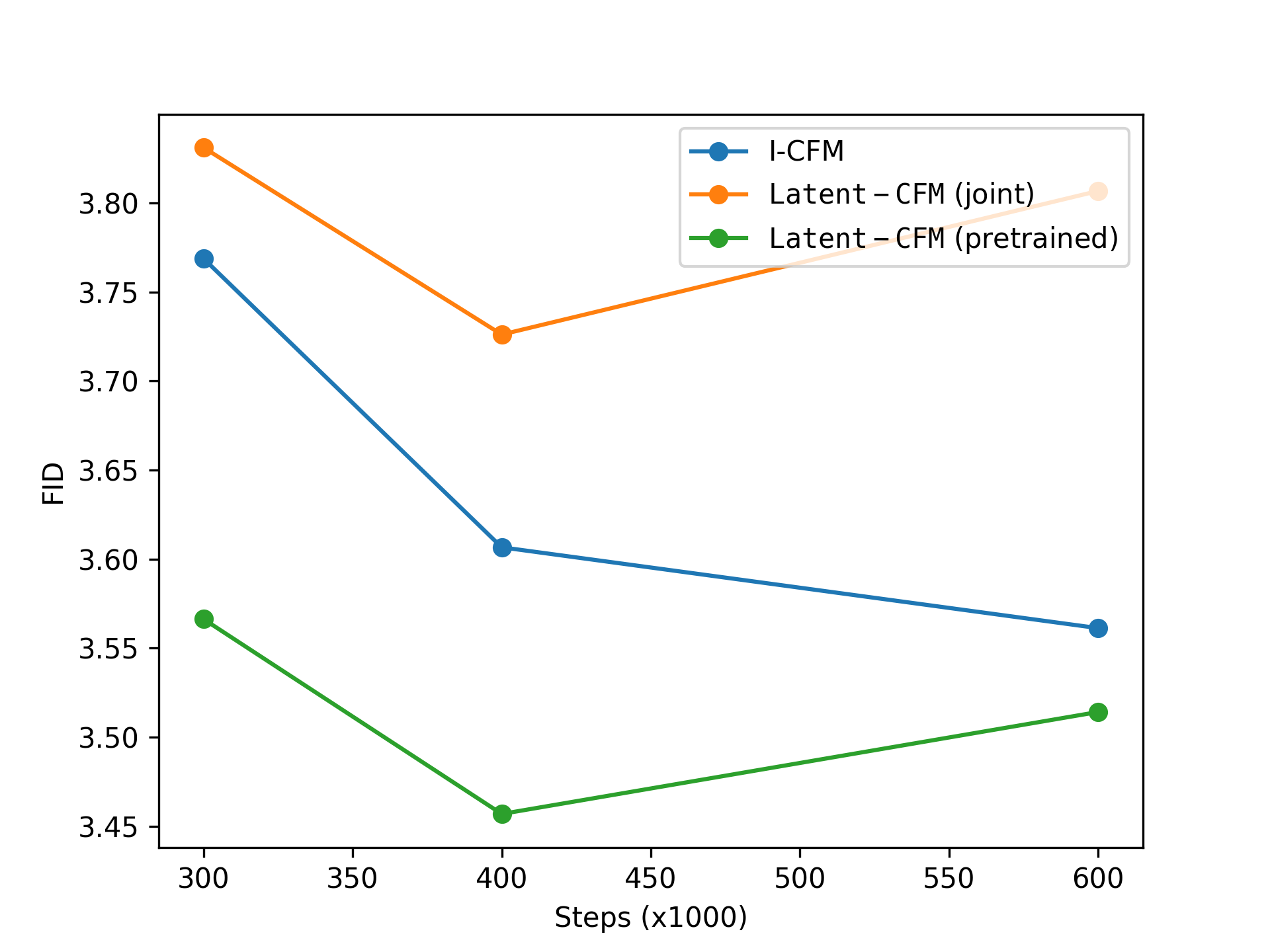}
        \caption{CIFAR10}
    \end{subfigure}
    \begin{subfigure}[b]{0.41\textwidth}
        \centering
        \includegraphics[width=\linewidth]{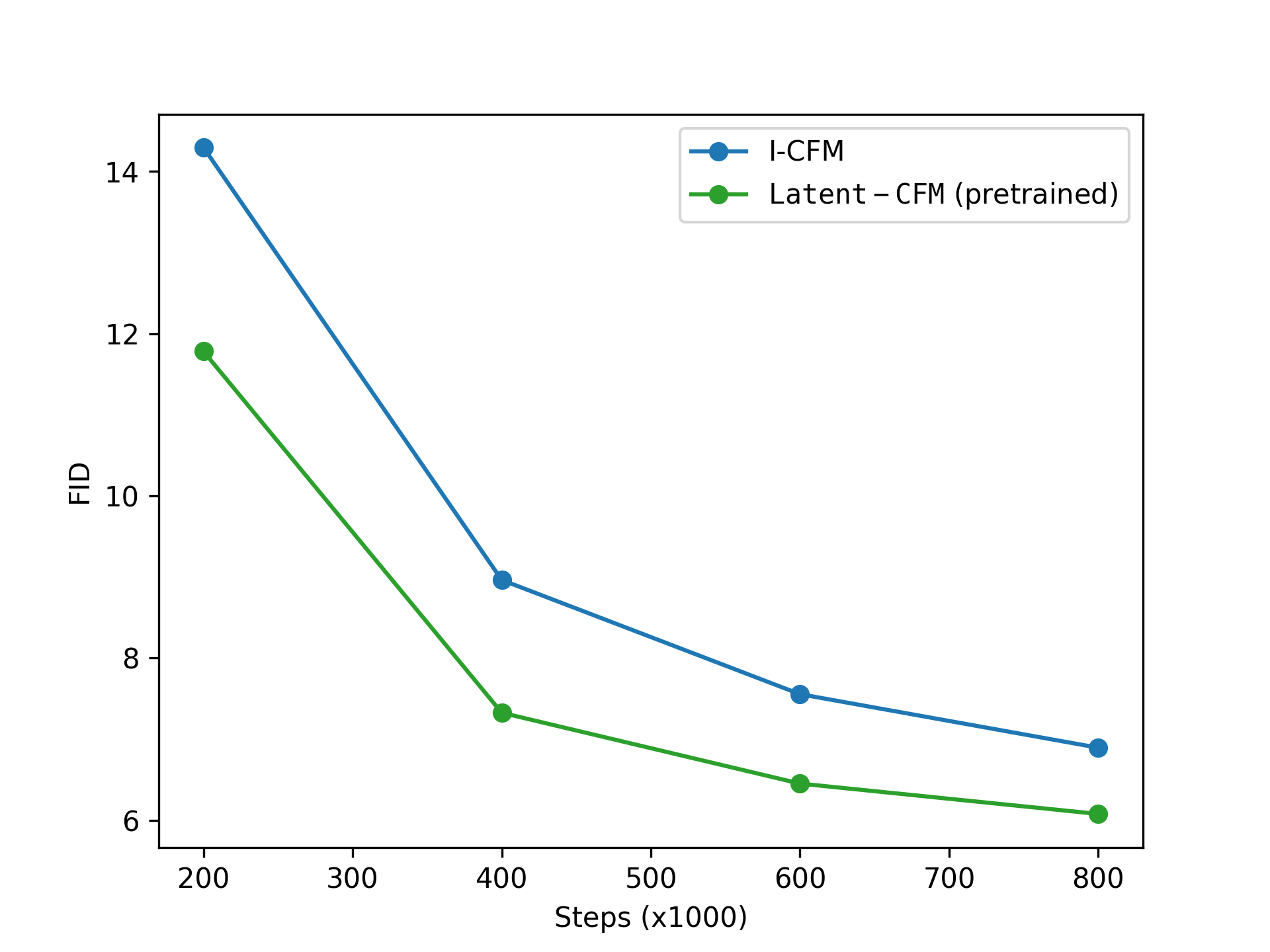}
        \caption{ImageNet}
    \end{subfigure}
    \caption{FID vs training steps show that \texttt{Latent-CFM} with pretrained VAE shows better generation quality compared to baseline methods on both CIFAR10 and ImageNet datasets early in the training process.}
    \label{fig:fid_steps}
\end{figure*}
\begin{table*}[t]
\centering
\resizebox{\textwidth}{!}{   
\begin{tabular}{|c|c|c|c|c|c|c|c|c|c|c|c|}
   \hline
   \multirow{3}{1.5cm}{\textbf{Methods}}  & \multicolumn{4}{c|}{\textbf{CIFAR10}}  &  \multicolumn{4}{c|}{\textbf{MNIST}} & \multicolumn{3}{c|}{\textbf{ImageNet}}\\
   \cline{2-5} \cline{6-9} \cline{10-12}& \multirow{2}{1.9cm}{\textbf{\# Params.}} & \multicolumn{3}{c|}{\textbf{FID ($\downarrow$)}}  & \multirow{2}{1.9cm}{\textbf{\# Params.}} & \multicolumn{3}{c|}{\textbf{FID ($\downarrow$)}} & \multirow{2}{1.9cm}{\textbf{\# Params.}} & \multicolumn{2}{c|}{\textbf{FID ($\downarrow$)}}\\
   \cline{3-5} \cline{7-9} \cline{11-12}
   & & \textbf{100} & \textbf{1000} & \textbf{Adaptive} & & \textbf{100} & \textbf{1000} & \textbf{Adaptive}& & \textbf{ODE} & \textbf{SDE}\\
	\hline
	OT-FM & 35.8 M & 4.661  & 3.862  & 3.727 & 1.56 M & 15.101  & 15.880  & 16.012  & - & - & - \\
    I-CFM & 35.8 M & 4.308  & \textbf{3.573}  & 3.561 & 1.56 M & 14.272  & 14.928  & 15.050 & 675.1M & 6.893 & 6.745\\
    \hline
    \texttt{Latent-CFM} (joint) & 47.3 M & 4.675  & 3.931  & 3.807  & 1.58 M & \textbf{13.818}  & 14.572  & \textbf{14.674} & - & - & - \\
    \texttt{Latent-CFM} (pretrained) & 36.1 M & \textbf{4.246}  & 3.575  & \textbf{3.514}  & 1.58 M & 13.848  & \textbf{14.543}  & 14.694 & 675.2M & \textbf{6.076} & \textbf{5.955}\\
   \hline
   \end{tabular}
   }
   \caption{Image generation performance of \texttt{Latent-CFM} compared to I-CFM and OT-FM on natural image datasets. Our method exhibits improved (or similar) FID over the state-of-the-art methods using both fixed-step Euler and the adaptive \texttt{dopri5} solver for all datasets.}
    \label{tab:uncond_generation}
\end{table*}

\paragraph{Training efficiency} Fig.~\ref{fig:synthetic_eff} shows the Wasserstein-2 (W2) distances with the test data vs the training epochs for the two \texttt{Latent-CFM} variants and the I-CFM and OT-CFM on the 2d synthetic dataset. We observe that \texttt{Latent-CFM} methods show lower W2 distances than the baseline methods for all training epochs. \texttt{Latent-CFM} with the GMM achieves a marginally better result than with the VAE encoder.   

\paragraph{Generation quality} Fig.~\ref{fig:synthetic_2d_cfm}(b)-(d) show the generation trajectories (yellow lines) of I-CFM, VRFM, and OT-CFM 
from the source to the target samples on the 2d triangle dataset. Fig.~\ref{fig:synthetic_2d_lcfm_vae} shows a 3d trajectory plot for \texttt{Latent-CFM} (VAE) with the samples from the time steps $[0,0.6,1]$ highlighted. Compared to the other models, \texttt{Latent-CFM} generates samples that present all modes of the true data distribution. 
Table~\ref{tab:w2_synthetic} shows the summary (mean $\pm$ standard deviation) of W2 metrics of all models, where
the \texttt{Latent-CFM} variants exhibit lower W2 distances from the test samples than the competing methods. The GMM variant of \texttt{Latent-CFM} achieves the lowest W2 score. 

The VRFM method shows the highest W2 distance among all methods. Fig.~\ref{fig:VRFM_comparison} shows the effect of simplifying the VRFM input from $(x_0,x_1,x_t,t)$ to $x_1$ for the encoder model (details in Sec.~\ref{sec:comp_vrfm}). The plot shows a significant improvement in the generation with an improved final W2 distance of $0.015$ vs the $0.050$ in Table~\ref{tab:w2_synthetic}. This demonstrates that it is easier to learn with the \texttt{Latent-CFM} encoder model, $q(.|x_1)$, with the data as input, due to the underlying multimodal data distribution. 

\subsection{Unconditional Image Generation}
For image generation, we train the OT-CFM, I-CFM, and \texttt{Latent-CFM} on MNIST, CIFAR10, and ImageNet $256 \times 256$ datasets. The details of the datasets are in the appendix.  
On MNIST and CIFAR10, we followed the network architectures and hyperparameters from~\cite{tong2024improving} to train OT-CFM and I-CFM. We did not find an open-source implementation for VRFM and implemented two VRFM variants with inputs (1) $(x_1,x_0,x_t,t)$, and (2) $(x_1,t)$ on CIFAR10. 
We train two variants of \texttt{Latent-CFM} with: 
(1) a \textbf{pretrained} encoder where only $\lambda$ is updated during training of the vector field network, and (2) an encoder \textbf{jointly} whose full parameter set is trained from scratch together with the vector field network.
We fix $\beta = 0.005$ for MNIST and $\beta = 0.001$ for CIFAR10 in Eq.~\ref{eq:latent_CFM_KL_obj}. 
The analysis on the impact of $\beta$ is in Sec.~\ref{sec:sel_beta}. We train all models for 600K steps on CIFAR10 and 100K on MNIST. For both datasets, we use both a fixed-step Euler solver for 100 and 1000 steps and an adaptive \texttt{dopri5} solver for solving the ODE. 

On ImageNet, we compare \texttt{Latent-CFM} with the SiT model \cite{ma2024sitexploringflowdiffusionbased}, which is an I-CFM model. We train both models for 800K steps. We pretrain a VAE for 200K steps and use the encoder for training \texttt{Latent-CFM}. We fix $\beta = 0.001$ and the latent dimension to be 128 for the \texttt{Latent-CFM} training.  We follow the Stable Diffusion \cite{rombach2022highresolutionimagesynthesislatent} architecture for training flows embedded in a latent space of smaller dimension. We were unable to find an open-source implementation for VRFM on ImageNet. We use an adaptive \texttt{dopri5} solver for ODE sampling and an Euler-Maruyama sampler for 250 steps for SDE sampling.


\paragraph{Training efficiency} Fig.~\ref{fig:fid_steps} shows the FID vs training steps for I-CFM and \texttt{Latent-CFM}
on CIFAR10 and ImageNet. On both datasets, \texttt{Latent-CFM} with pretrained VAE exhibits significantly lower FID than I-CFM across training steps, demonstrating efficiency. On both datasets, compared to I-CFM, \texttt{Latent-CFM}  achieves similar levels of FID ($\sim 3.55$ on CIFAR10, $\sim 7$ on ImageNet) with 50\% fewer training steps. 
Note that the best FID for \texttt{Latent-CFM} is $\sim 3.467$, which is lower than the final I-CFM FID $\sim 3.561$ and is the minimum across methods and solvers (Table~\ref {tab:uncond_generation}) on CIFAR10. On CIFAR10, \texttt{Latent-CFM} with a pretrained encoder shows better ($\sim 7\%$ lower FID across steps)
efficiency than a jointly trained model. On ImageNet, we observe a significant decrease in training speed to $1.3$ steps/second for \texttt{Latent-CFM} with a jointly trained encoder, from $2.4$ steps/second using a pretrained encoder. This further demonstrates the efficiency of \texttt{Latent-CFM} compared to VRFM (which jointly trains an encoder with a larger set of inputs) on large datasets. We add the results for the \texttt{Latent-CFM} with a jointly trained encoder in Appendix~\ref{sec:jont_imnet}.

\paragraph{Generation quality} We evaluate the samples using the Fréchet inception distance (FID) \cite{parmar2021cleanfid}, which quantifies the quality and diversity of the generated images. Table~\ref{tab:uncond_generation} shows the parameter count and the FID of the unconditional generation for the models on CIFAR10, MNIST, and ImageNet for different solvers and different numbers of integration steps. All models are evaluated at their final training step. 
We observe that \texttt{Latent-CFM} variants consistently outperform the I-CFM and OT-CFM across datasets and solvers in terms of FID. On MNIST, two variants of \texttt{Latent-CFM} show similar FID. On CIFAR10, \texttt{Latent-CFM} with a pretrained encoder achieves lower FID (by $\sim 7\%$) compared to the jointly trained model, highlighting the benefit of a pretrained encoder model. \texttt{Latent-CFM} sampling uses features learned from the training data. Sec.~\ref{sec:generalization} shows that this does not prohibit our approach from generalizing beyond the training data.

With our best efforts, we were unable to reproduce the VRFM FID numbers on CIFAR10 from \cite{guo2025variational}. Table.~\ref{tab:vrfm_cifar} compares the final step FID with \texttt{dopri5} solver for the two VRFM models with our approach and with the two best VRFM models reported in \cite{guo2025variational}. We observe that \texttt{Latent-CFM} performs similarly to the best-performing VRFM model from \cite{guo2025variational} (with $\sim 20\%$ less time in terms of GPU hours). We also observe in our implementation of VRFM that simplifying the input to the model results in lower FID.   



\subsection{Generation of 2D Darcy Flow}
\begin{figure}[t]
    \centering
    \includegraphics[width=0.8\linewidth]{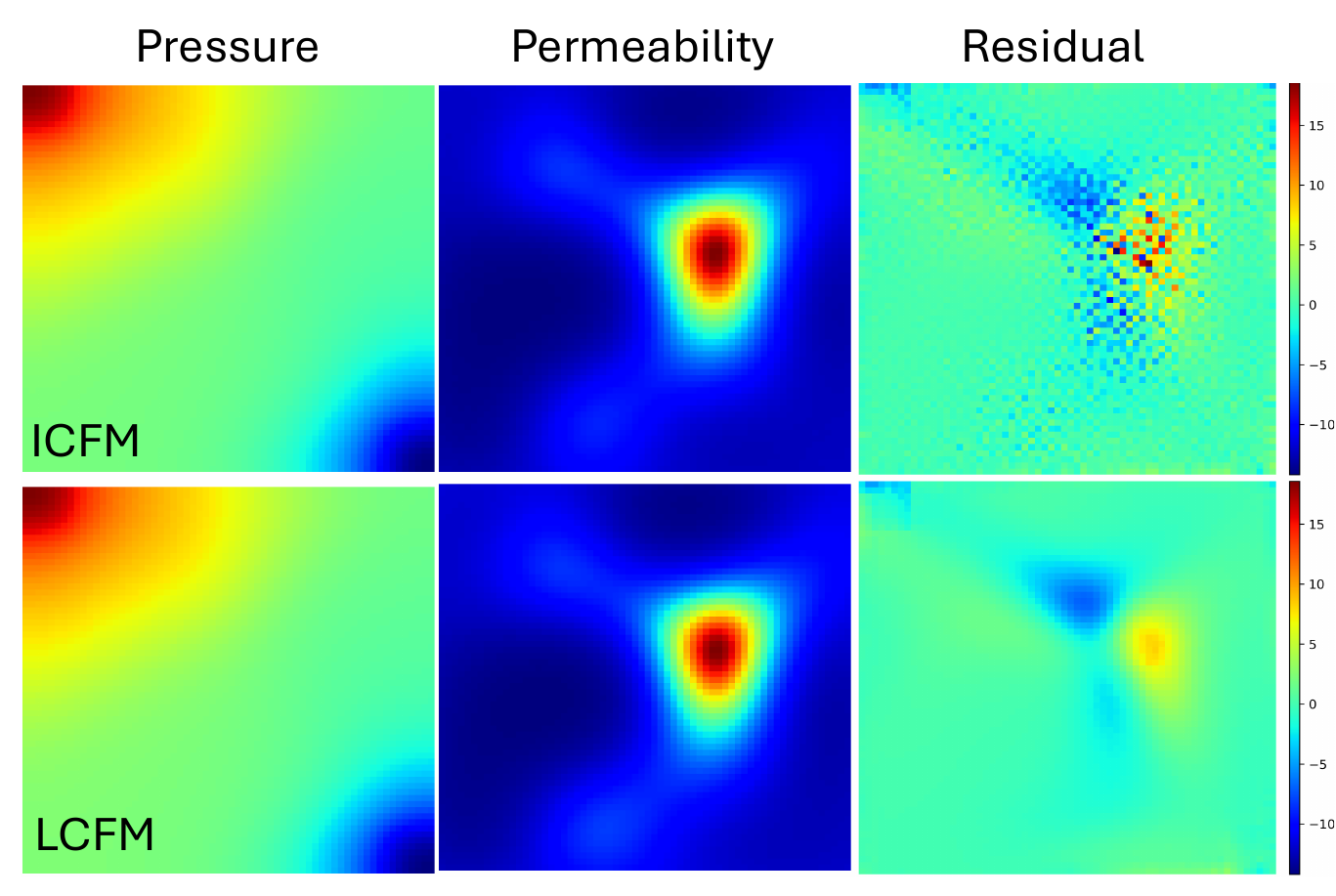}
    \caption{\small Plot showing a generated sample from I-CFM (top row) and \texttt{Latent-CFM} (bottom row) models trained on 10K samples generated by solving Darcy Flow equations \cite{jacobsen2025cocogen}. Visually generated samples from both models resemble true Pressure and Permeability fields. However, \texttt{Latent-CFM} exhibits a better fit to the Darcy flow equations as measured by the residuals.}\label{fig:darcy_flow} 
\end{figure}
\begin{table}
    \centering
    \begin{tabular}{|c|c|c|c|}
    \hline
        &\multicolumn{2}{c|}{\textbf{I-CFM}} & \texttt{Latent-CFM}  \\
        \hline
      \textbf{\# Params.}   & 35.7M & 68.8M & 35.7M\\
      \hline
     \textbf{Residual median} \(\downarrow\) & 5.922 & 4.921 & \textbf{3.18} \\
     \hline
    \end{tabular}
    \caption{Comparison of the PDE residuals of the generated Darcy flow
    samples (\([K, p]\) pairs) from \texttt{Latent-CFM} and I-CFM with two
    model sizes. \texttt{Latent-CFM} model size is a sum of the vector field parameters and the VAE encoder parameters. The samples generated using \texttt{Latent-CFM}, despite the smaller models, present lower PDE residuals.}
    \label{tab:darcy}
\end{table}
\begin{figure}[t]
    \centering
    \includegraphics[width=.75\linewidth]{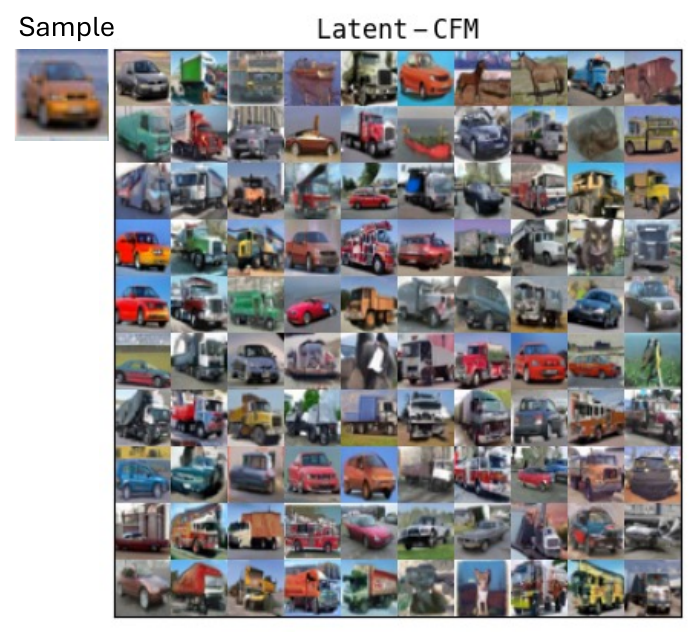}
    \caption{Conditioning the generation process of \texttt{Latent-CFM} on the features learned from the training samples shows that the framework generates samples by varying the objects while retaining properties like background, colors, object shape, etc. All samples shown have the same feature vector $f$ respectively.}
    \label{fig:cifar_style}
\end{figure}

Beyond the image space, generative models have great potential in advancing scientific computing tasks. Different from images, scientific data must satisfy specific physical laws on top of visual correctness. As a result, generated samples from the unconditional models usually present non-physical artifacts due
to the lack of physics-based structure imposed during learning~\cite{jacobsen2025cocogen, chengHardConstraintGuided2024}. We use \texttt{Latent-CFM} to explore its performance of generating permeability and pressure fields (\(K\) and \(p\)) in 2D Darcy flow and compute the residuals of
the governing equations to evaluate generated samples. Details of data generation and parameter choices are in Appendix~\ref{appdix:darcy}.

We train I-CFM and \texttt{Latent-CFM} on the 10K samples of the 2d Darcy flow process. We adopted the network architecture for the vector field from our CIFAR10 experiments for this data by changing the input convolution to adapt to the Darcy flow data size, which is $(2,64,64)$. We pretrain an embedding VAE model following \cite{rombach2022highresolutionimagesynthesislatent} for 100k steps. We set the latent dimension to be $2$ and $\beta = 0.001$ for the feature extractor.
\begin{figure*}[t]
    \centering
    \begin{subfigure}{0.42\textwidth}
        \centering
        \includegraphics[width=\textwidth]{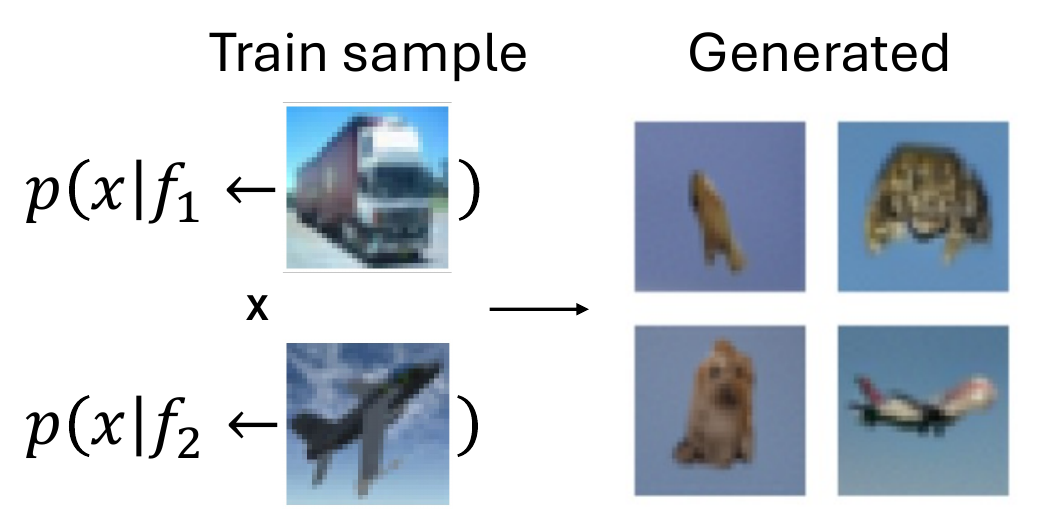}
        \caption{}
    \end{subfigure}
    \hspace{-0.1cm}
    \begin{subfigure}{0.42\textwidth}
        \centering
        \includegraphics[width=\textwidth]{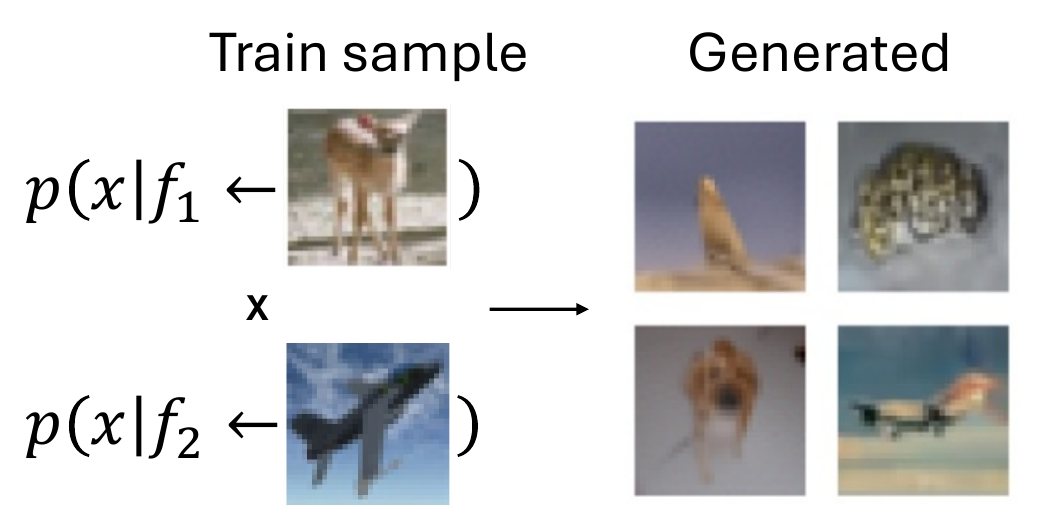}
        \caption{}
    \end{subfigure}
    \caption{Plot showing a selected set of generated samples from the product of two feature-conditioned distributions using \texttt{Latent-CFM}. Changing one of the feature-conditioned distributions in the product, all generated samples share the high-frequency features (such as the skin pattern of the frog), varying the low-frequency features (like color and background).}
    \label{fig:composition}
\end{figure*}

\paragraph{Generation quality} Fig.~\ref{fig:darcy_flow} shows the generated samples from I-CFM and
\texttt{Latent-CFM}. Both models generate visually plausible \([K, p]\) pairs.
However, \texttt{Latent-CFM} resulted in samples with lower residuals, making them more
physically aligned with the governing equations. Table.~\ref{tab:darcy} shows the median mean-squared-residuals for the methods calculated on 500 generated samples and also their parameter counts. We observe that a smaller \texttt{Latent-CFM} (\# parameters $35.7$M) significantly outperforms the large I-CFM model (\# parameters $68.8$M) in terms of the median residual.  Using the 2d latent space, Fig.~\ref{fig:lat_traversal_darcy} (details are in Sec.~\ref{sec:additional_darcy}) shows the latent traversal along each of the two latent coordinates. The figure shows that traversing the latent space generates physically consistent samples. The superiority of \texttt{Latent-CFM} in this dataset motivates us to investigate its performance for other scientific data generation tasks and examine the learned latent space in future work.

\subsection{Latent space analysis}\label{sec:latent_analysis}
This section analyzes the latent space learned by the \texttt{Latent-CFM} models trained on the CIFAR10 dataset. We investigate (1) the effect of conditioning the \texttt{Latent-CFM} generation on the data features $f \sim q(.|x^{train}_1)$, (2) compositional generation composing multiple feature-conditioned distributions learned from multiple data points. 

\paragraph{Conditional generation by image features} An important property of the \texttt{Latent-CFM} is the ability to generate samples from the distribution $p_1(x|f)$ conditioned on the features $f$. Fig.~\ref{fig:cifar_style} shows 100 generated samples conditioned on the CIFAR10 data sample of a car image (on the left). For all generations, we fix the same latent sample $f \sim q_{\hat{\lambda}}(f|x^{car}_1)$ where $x_1^{car}$ denotes the selected CIFAR10 image and vary the source samples $x_0 \sim N(0,I)$. We observe that \texttt{Latent-CFM} generates different images while retaining properties like color schemes and object shape, etc. 

\paragraph{Composing feature-conditioned distributions}
Recent works \cite{du2024reducereuserecyclecompositional, bradley2025mechanismsprojectivecompositiondiffusion} in diffusion models have explored generating high-fidelity samples from a composition of class-conditional generative models. We extend this idea to flow matching models for generating samples from a composition of feature-conditioned distributions using \texttt{Latent-CFM}. Given two feature-conditioned densities $p(.|f_1), p(.|f_2)$ where the features $f_1 \sim q(.|x^{train}_1), f_2 \sim q(.|x^{train}_2)$ are extracted from two training samples $(x^{train}_1, x^{train}_2)$, we want to sample from the product distribution $p^1 = \prod_{i=1,2} p(.|f_i)$, which has high likelihood under both feature conditioned distributions. Sec.~\ref {sec:compose_details} describes the details of our inference algorithm.

Fig.~\ref{fig:composition} shows a selected set of generated samples from two products of feature-conditioned distributions. The two products share one common set of features coming from the image of the airplane (bottom row). The Gaussian source samples vary within the generated samples from each product, but are the same between the two. We observe that the generated samples share high-frequency features (such as the skin pattern of the frog) between the two distributions. However, the low-frequency features vary in the generated samples (color and background changes from blue to a mixture of blue and yellow) between the two products. This indicates that the feature extractor helps the \texttt{Latent-CFM} to condition the generation on the low-frequency features from the training data, while the CFM model varies the high-frequency features to generate diverse samples. We provide an expanded set of 100 generated samples from the two product distributions in Fig.~\ref{fig:composition_100}.

\section{Conclusion}
Flow matching models generalize the transport paths of diffusion models, thus unifying flow-based generative models. However, existing flow matching and diffusion studies often do not consider the structure of the data explicitly when constructing the flow
from source to target distribution. 
In this study, we present \texttt{Latent-CFM}, a framework that incorporates the underlying clustering structure of the data as latent variables in conditional flow matching. We present training/inference algorithms to adapt popular deep latent variable models into the CFM framework. Using experiments on synthetic and benchmark image datasets, we show that our approach improves (or shows similar) generation quality (FID $\sim 3.5$ on CIFAR10) compared to state-of-the-art CFM models, especially with significantly fewer training steps (with $\sim 50 \%$ in CIFAR10). In addition, we demonstrate the utility of \texttt{Latent-CFM} in generating more physically consistent Darcy flow data than I-CFM. Finally, through latent space analysis, we explore the natural connection of our approach to conditional image generation and compositional generation conditioned on image features like background, color, etc.

One interesting direction for future research could be to tighten the upper bound in Eq.~\ref{eq:latent_CFM_KL_obj} with a data-driven learned prior $\hat{p}(f)$. Based on recent advances in estimating information-theoretic bounds \cite{nilssonremedi}, one can train a learned prior alongside \texttt{Latent-CFM} to learn better latent representations with lower loss value. It would also be interesting to disentangle the latent features, which can further improve control over the generation process.
In addition, it will be interesting to explore the application of our approach in scientific machine learning. An area of application could be in multifidelity modeling, where using \texttt{Latent-CFM}, one can inform a CFM model trained on a high-fidelity (expensive) simulation dataset (example, fluid dynamics simulations) with latents learned from inexpensive low-fidelity simulated data. Based on our experiment with the Darcy Flow dataset, it could be a promising approach to improve generation performance by satisfying underlying physics constraints.            

\section{Acknowledgments} 

The computations were enabled by the computational resources of the Argonne Leadership Computing Facility, which is a DOE Office of Science User Facility supported under Contract DE-AC02-06CH11357, Laboratory Computing Resource Center (LCRC) at the Argonne National Laboratory. 
AS, YS, and SM were supported by the U.S. Department of Energy, Office of Science, Advanced Scientific Computing Research, through the SciDAC-RAPIDS2 institute under Contract DE-AC02-06CH11357, and additionally, SM was supported by the Competitive Portfolios For Advanced Scientific Computing Research Project, Energy Efficient Computing: A Holistic Methodology, under Contract DE-AC02-06CH11357.

\bibliography{sample.bib}

\appendix
\renewcommand{\thefigure}{A.\arabic{figure}}
\renewcommand{\thetable}{A.\arabic{table}}
\renewcommand{\thealgorithm}{A.\arabic{algorithm}}
\setcounter{section}{0}
\setcounter{figure}{0}
\setcounter{table}{0}
\setcounter{algorithm}{0}

\section*{Checklist}

\begin{enumerate}

  \item For all models and algorithms presented, check if you include:
  \begin{enumerate}
    \item A clear description of the mathematical setting, assumptions, algorithm, and/or model. [Yes/No/Not Applicable]
    Yes
    \item An analysis of the properties and complexity (time, space, sample size) of any algorithm. [Yes/No/Not Applicable]
    Yes
    \item (Optional) Anonymized source code, with specification of all dependencies, including external libraries. [Yes/No/Not Applicable]
    Yes
  \end{enumerate}

  \item For any theoretical claim, check if you include:
  \begin{enumerate}
    \item Statements of the full set of assumptions of all theoretical results. [Yes/No/Not Applicable]
    Yes
    \item Complete proofs of all theoretical results. [Yes/No/Not Applicable]
    Yes
    \item Clear explanations of any assumptions. [Yes/No/Not Applicable]
    Yes
  \end{enumerate}

  \item For all figures and tables that present empirical results, check if you include:
  \begin{enumerate}
    \item The code, data, and instructions needed to reproduce the main experimental results (either in the supplemental material or as a URL). [Yes/No/Not Applicable]
    Yes
    \item All the training details (e.g., data splits, hyperparameters, how they were chosen). [Yes/No/Not Applicable]
    Yes
    \item A clear definition of the specific measure or statistics and error bars (e.g., with respect to the random seed after running experiments multiple times). [Yes/No/Not Applicable]
    Yes for W2 distances in Table 1. FIDs have little variation.
    \item A description of the computing infrastructure used. (e.g., type of GPUs, internal cluster, or cloud provider). [Yes/No/Not Applicable]
    Yes.
  \end{enumerate}

  \item If you are using existing assets (e.g., code, data, models) or curating/releasing new assets, check if you include:
  \begin{enumerate}
    \item Citations of the creator If your work uses existing assets. [Yes/No/Not Applicable]
    Yes
    \item The license information of the assets, if applicable. [Yes/No/Not Applicable]
    Not Applicable
    \item New assets either in the supplemental material or as a URL, if applicable. [Yes/No/Not Applicable]
    Yes
    \item Information about consent from data providers/curators. [Yes/No/Not Applicable]
    Not Applicable
    \item Discussion of sensible content if applicable, e.g., personally identifiable information or offensive content. [Yes/No/Not Applicable]
    Not Applicable
  \end{enumerate}

  \item If you used crowdsourcing or conducted research with human subjects, check if you include:
  \begin{enumerate}
    \item The full text of instructions given to participants and screenshots. [Yes/No/Not Applicable]
    Not Applicable
    \item Descriptions of potential participant risks, with links to Institutional Review Board (IRB) approvals if applicable. [Yes/No/Not Applicable]
    Not Applicable
    \item The estimated hourly wage paid to participants and the total amount spent on participant compensation. [Yes/No/Not Applicable]
    Not Applicable
  \end{enumerate}

\end{enumerate}

\clearpage
\appendix

\onecolumn
\aistatstitle{Appendix}

\section{Theoretical analysis}
In the following proposition, we adopt a slightly different formalism where we denote between the random variables with capital letters ($X_0, X_1, F$, etc.) and the values they take ($x_0, x_1, f$).
For a random variable $Y$, $\mathcal{L}(Y)$ denotes its law.

\begin{proposition}\label{prop:marginal-preserved}
    Let $(X_0, X_1)$ be drawn under some probability measure $\mathbb{P}$ on a measurable space $(\Omega, \mathcal{F})$, which also carries the latent random variable $F$ in $\mathbb{R}^{d_f}$. 
    Assume that $X_1 - X_0$ is integrable and define the process $\{X_t\}_t$ by 
    \begin{equation}\label{eq:cond-straight-dynamics}
        X_t = tX_0 + (1-t)X_1,
    \end{equation}
    and let $\{\mu^f_t\}_t = \{\mathcal{L}(X_t \mid F = f)\}_t$ be its ($F$-conditional) marginal probability path, under $\mathbb{P}$.
    Given an optimally learned $F$-conditional vector field $v^* = v^{*}_{t,f}(x)$, minimizing the loss function in \eqref{eq:latent_CFM_KL_obj}, and its $F$-conditional flow $\phi^{v^*, f}$, we have that its marginal probability path is equal to that of the ground truth process in \eqref{eq:cond-straight-dynamics} $\mathbb{P}$-a.s. in $f$, i.e. $\{\mathcal{L}(\phi^{v^*, f}_t(X_0))\}_{t} = \{\mu^f_t\}_t$ ($F_\#\mathbb{P}$-a.s. in $f$).
    In particular, $\mathcal{L}(\phi^{v^*, f}_1(X_0)) \overset{\mathcal{}}{=} \mathcal{L}(X_1 \mid F = f)$ ($F_\#\mathbb{P}$-a.s. in $f$) and $\phi^{v^*, F}_1(X_0) \overset{\mathcal{L}}{=} X_1$.
\end{proposition}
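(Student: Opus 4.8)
The plan is to reduce the claim to the standard conditional-flow-matching marginalization theorem applied \emph{conditionally on} $F$, and then to recombine over the latent law by disintegration. First I would note that in \eqref{eq:latent_CFM_KL_obj} the Kullback--Leibler term involves only the encoder $q_{\lambda_{final}}$ and is independent of the field $v_\theta$; therefore the minimization over $v_\theta$ is governed solely by the squared-error part, which is exactly \eqref{eq:latent_CFM_obj}. I would identify the sampling distributions in that objective with the laws carried by $\P$ (so $q(x_0,x_1)$ is the law of $(X_0,X_1)$, $q(f\mid x_0,x_1)$ the $\P$-conditional law of $F$, and $p_t(\cdot\mid x_0,x_1)$ the conditional path of $X_t$), so the loss is literally a $\P$-expectation. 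Using the Bayes identity $q(x_0,x_1)\,q(f\mid x_0,x_1)=q(f)\,q(x_0,x_1\mid f)$ already recorded in the text together with \eqref{eq:latent_cfm}, this term becomes an average over $f\sim q(f)$ of an \emph{ordinary} CFM objective for the $f$-conditional coupling $q(x_0,x_1\mid f)=p_0(x_0)\,p_1(x_1\mid f)$.

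Second, because $v_\theta(\cdot,f,\cdot)$ may take independent values at each $f$ and we analyze the exact minimizer over measurable fields (this is what ``optimally learned'' encodes), minimizing the $f$-average is equivalent to minimizing the inner CFM objective for $F_\#\P$-almost every $f$. For each such $f$ this is precisely the hypothesis of \cite[Theorem~3]{lipman2023flow} and \cite[Theorem~2.1]{tong2024improving}: the $L^2$-minimizer is the $f$-conditional marginal field, the posterior-weighted conditional expectation
\begin{equation*}
v^*_{t,f}(x)=\E\big[\,u_t(X_t\mid X_0,X_1)\;\big|\;X_t=x,\,F=f\,\big],
\end{equation*}
and this field satisfies the continuity equation for the $f$-conditional marginal path $\{\mu^f_t\}_t$. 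Hence its flow $\phi^{v^*,f}$ transports the time-$0$ marginal onto $\mu^f_t$ at every $t$, i.e. $\mathcal{L}(\phi^{v^*,f}_t(X_0))=\mu^f_t$ for $F_\#\P$-a.e. $f$, which is the first assertion.

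Third, I would read off the endpoints. Evaluating the marginal-path identity at the terminal time, where the interpolation \eqref{eq:cond-straight-dynamics} equals the data endpoint, gives $\mathcal{L}(\phi^{v^*,f}_1(X_0))=\mathcal{L}(X_1\mid F=f)$ for a.e. $f$. The unconditional statement then follows by the tower property: for every bounded measurable $g$,
\begin{equation*}
\E\big[g(\phi^{v^*,F}_1(X_0))\big]=\int \E\big[g(\phi^{v^*,f}_1(X_0))\big]\,dF_\#\P(f)=\int \E\big[g(X_1)\mid F=f\big]\,dF_\#\P(f)=\E\big[g(X_1)\big],
\end{equation*}
so $\phi^{v^*,F}_1(X_0)\overset{\mathcal{L}}{=}X_1$.

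The step that needs the most care is the passage from ``$L^2$-minimizer'' to ``generates $\mu^f_t$'' in the conditional setting, together with the almost-sure-in-$f$ decoupling. The decoupling is a disintegration argument: one must check that the pointwise-in-$f$ minimizers assemble into a jointly measurable field attaining the infimum, so that $\inf$ and $\int dF_\#\P(f)$ may be interchanged. Verifying that the conditional-expectation field solves the continuity equation is where the integrability of $X_1-X_0$ enters, guaranteeing that the marginal field is well-defined and that the differentiation under the integral sign in the continuity-equation computation is licensed; since \eqref{eq:cond-straight-dynamics} makes the conditional paths degenerate (Dirac), this is cleanest to obtain as the $\sigma\to0$ limit of the Gaussian conditional paths $p_t(x\mid x_0,x_1)=\mathcal{N}(x\mid\,\cdot\,,\sigma^2I)$ adopted in the training algorithm, where the cited CFM derivations apply directly.
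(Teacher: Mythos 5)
Your overall architecture matches the paper's: the KL term is irrelevant to the minimization over $v_\theta$, the Bayes/disintegration step reduces the loss to per-$f$ CFM objectives, the optimal field is the conditional expectation $v^*_{t,f}(x)=\mathbb{E}[X_1-X_0\mid X_t=x,\,F=f]$, and both endpoint claims follow by the tower property, exactly as in the paper. The genuine gap sits in the one step you yourself flag as delicate: you justify ``the conditional-expectation field generates $\{\mu^f_t\}_t$'' by invoking \cite[Theorem~3]{lipman2023flow} and \cite[Theorem~2.1]{tong2024improving}, but those marginalization results are proved for conditional probability paths given by strictly positive (Gaussian) densities, with positive marginals and accompanying regularity and decay hypotheses. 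Here the conditional path induced by \eqref{eq:cond-straight-dynamics} is a Dirac mass along the interpolation, and the conditional marginal $\mu^f_t$ need not admit a density at all (the paper notes this explicitly when it says the continuity equation is ``only formal''), so the cited theorems do not apply as stated. Your proposed repair---recover the degenerate case as a $\sigma\to 0$ limit of the Gaussian-path case---is only gestured at, and it is not routine: one would have to show that the $\sigma$-dependent minimizers converge to $v^*_{t,f}$, that the associated marginal paths and flows converge, and that the continuity equation survives the limit. None of that is carried out, so the central assertion of the proposition is delegated to results whose hypotheses fail in the setting at hand.

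The paper closes exactly this hole with a short, self-contained argument that never requires densities: it verifies the continuity equation in weak (test-function) form,
\begin{equation*}
    \frac{d}{dt}\int_{\mathbb{R}^d} h \, d\mu^f_t = \int_{\mathbb{R}^d} \langle \nabla h, v^*_{t,f} \rangle \, d\mu^f_t ,
\end{equation*}
by differentiating under the integral sign (this is precisely where the integrability of $X_1-X_0$ is used), conditioning on $X_t$ via the tower property under the regular conditional measure $\mathbb{P}_f$, and substituting the conditional-expectation characterization of $v^*$; it then cites \cite[Proposition~8.1.8]{ambrosio2008gradient} to pass from the weak continuity equation, plus regularity of $v^*$, to the statement that the flow of $v^*_{t,f}$ transports the initial law onto $\mu^f_t$. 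Note also that your worry about assembling pointwise-in-$f$ minimizers into a jointly measurable field dissolves under this formulation: the $L^2(\mathbb{P})$-minimizer over jointly measurable fields is the conditional expectation given $(X_t,F)$, which is jointly measurable by construction and agrees $F_\#\mathbb{P}$-a.s.\ with the per-$f$ minimizers. If you replace the appeal to the Lipman/Tong theorems (and the sketched $\sigma\to 0$ limit) by this direct computation, your proof becomes complete and essentially coincides with the paper's.
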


\begin{proof}
    We follow the proof ideas in \cite{liu2022flow} and \cite{guo2025variational}.
    We want to see that $(\mu, v^*)$ satisfies the continuity equation
    \begin{equation}\label{eq:continuity-equation}
        \dot{\mu}^f_t + \nabla \cdot (\mu^f_t v^*_{t, f}) = 0.
    \end{equation}
    The meaning of \eqref{eq:continuity-equation} is only formal, as $\mu^f_t$ may not even have a density.
    The \textbf{definition} of \eqref{eq:continuity-equation} is that for any \textit{test function} $h: \mathbb{R}^d \to \mathbb{R}$, i.e. $h$ is smooth and compactly supported in $\mathbb{R}^d$,
    \begin{equation}\label{eq:continuity-equation-definition-2}
        \frac{d}{dt}\int_{\mathbb{R}^d} h \, d\mu^f_t = \int_{\mathbb{R}^d} \langle \nabla h, v^*_{t,f} \rangle \, d\mu^f_t
    \end{equation}
    holds (in the sense of distributions on $(0, 1)$) \cite[p. 169-170]{ambrosio2008gradient}.
    Under some regularity conditions on $v^*$ \cite[Proposition 8.1.8]{ambrosio2008gradient}, we know that the theorem follows if we can show \eqref{eq:continuity-equation-definition-2}.

    Here, in fact, the derivative on the left hand side of \eqref{eq:continuity-equation-definition-2} exists in the classical sense, and the differentiation can be moved under the integral sign.
    This is seen by noting that with $g(t, \omega) \coloneq h(tX_0(\omega) + (1-t)X_1(\omega))$, the conditions for doing so in $\frac{d}{dt} \int g(t, \omega) \, \mathbb{P}_f(d\omega)$ are fulfilled, see e.g. \cite[Ch. A5]{durrett2019probability}.
    Most importantly, $\frac{d}{dt} g(t, \omega) = \langle \nabla h(tX_0 + (1-t)X_1), X_1 - X_0 \rangle_{\mathbb{R}^d}(\omega) = \langle \nabla h(X_t), \dot{X}_t \rangle_{\mathbb{R}^d}(\omega)$, where $h$ and its derivatives are bounded, and $\dot{X}_t = X_1 - X_0$ is integrable and thus conditionally integrable for almost all $f$.
    We get, taking $\mathbb{P}_{f}$ to be a regular conditional probability measure for $F = f$, known to exist via the disintegration theorem \cite[Theorem~3.4]{kallenberg2021foundations},
    \begin{equation}\label{eq:continuity-equation-definition-proof-1}
        \frac{d}{dt}\int_{\mathbb{R}^d} h \, d\mu^f_t = \int_{\mathbb{R}^d} \langle \nabla h(X_t), \dot{X}_t \rangle_{\mathbb{R}^d}(\omega) \mathbb{P}_f(d\omega) = \mathbb{E}^{\mathbb{P}_f}[\langle \nabla h(X_t), \dot{X}_t \rangle_{\mathbb{R}^d}] = \ldots
    \end{equation}
    Further, by using the tower property to condition on $X_t$,
    \begin{equation}\label{eq:continuity-equation-definition-proof-2}
        \ldots = \mathbb{E}^{\mathbb{P}_f}[\mathbb{E}^{\mathbb{P}_f}[\langle \nabla h(X_t), \dot{X}_t \rangle_{\mathbb{R}^d} \mid X_t]] = \mathbb{E}^{\mathbb{P}_f}[\langle \nabla h(X_t), \mathbb{E}^{\mathbb{P}_f}[\dot{X}_t \mid X_t] \rangle_{\mathbb{R}^d}] 
    \end{equation}
    But for almost all $f$, we have that $\mathbb{E}^{\mathbb{P}_f}[\dot{X}_t \mid X_t] = v^*_{t, f}(X_t)$, since $v^*$ is optimal for \eqref{eq:latent_CFM_KL_obj}, whose minimizer (for a fixed encoder $q_{\lambda_{final}}$) $v^*_{t,f}(x)$ is
    \begin{equation}\label{eq:continuity-equation-definition-proof-3}
        \mathbb{E}[u_t(x|X_0,X_1) \mid X_t = x, F = f] = \mathbb{E}[X_1 - X_0 \mid X_t = x, F = f] = \mathbb{E}^{\mathbb{P}_f}[\dot{X}_t \mid X_t = x].
    \end{equation}
    This in \eqref{eq:continuity-equation-definition-proof-1} and \eqref{eq:continuity-equation-definition-proof-2} gives
    \begin{equation}
        \frac{d}{dt}\int_{\mathbb{R}^d} h \, d\mu^f_t = \mathbb{E}^{\mathbb{P}_f}[\langle \nabla h(X_t), v^*_t(X_t) \rangle_{\mathbb{R}^d}] = \int_{\mathbb{R}^d} \langle \nabla h, v^*_{t,f} \rangle_{\mathbb{R}^d} \, d\mu^f_t,
    \end{equation}
    which finishes the proof of the main statement.
    The last part of the proposition can be seen from:
    \begin{equation}
        \mathbb{P}(\phi^{v^*, F}_1(X_0) \in A) = \mathbb{E}[\mathbb{P}(\phi^{v^*, F}_1(X_0) \in A \mid F)] = \mathbb{E}[\mathbb{P}(X_1 \in A \mid F)] = \mathbb{P}(X_1 \in A).
    \end{equation}
\end{proof}

\section{Relation with Variational Rectified Flow}
\label{sec:vrfm}
Recent work in variational rectified flow matching (VRFM) \cite{guo2025variational} has studied the effect of a mixture model of the vector field $u_t(x)$ induced by a latent variable. In Eq.\ref{eq:cond_flow}, we observe that the CFM objective function can be viewed as a log-likelihood of a Gaussian distribution model for $u_t(x) \sim N(u_t; v_\theta(x,t),I)$. In VRFM, we model the vector field by a mixture model induced by a latent variable $z \sim p(z)$,
\begin{align}\label{eq:mixture_vec}
    p(u_t|x_t,t) = \int p_\theta(u_t|x_t,t,z) p(z)df
\end{align}
In VRFM, given $z$, the conditional density $p_\theta(u_t|x_t,t,z)$ are assumed to be $N(u_t;v_{\theta}(x,t,z),I)$. To learn the latent variable $z$, the authors use a recognition model $q_\phi(z|x_0,x_1,x_t,t)$ or encoder. The parameters of the encoder and the learned vector field are learned jointly by optimizing a VAE objective,
\begin{align}\label{eq:VRFM_loss}
    \log p(u_t|x_t,t) &\geq \E_{z\sim q_{\phi}} \left[\log p_\theta(u_t|x_t,t,z)\right] \nonumber \\ &- D_{\mathrm{KL}}(q_\phi(z|x_0,x_1,x_t,t)||q(z)) 
\end{align}
A key observation in the VRFM objective in Eq~\ref{eq:VRFM_loss} is that the encoder model $q_\phi$ depends on $(x_0,x_1,x_t,t)$ which dynamically changes with time $t$. However, the generative model $q(z)$ is static and equal to $N(0,I)$. To generate samples from VRFM, we sample $f\sim N(0,I)$ once and solve Eq.~\ref{eq:vec_field} using the learned vector field $v_{\theta}(x,t,z)$ fixing $z$. 

In \texttt{Latent-CFM}, we propose learning a static encoder model $q_\phi$ from the data $x_1$ and optimize the Eq~\ref{eq:VRFM_loss} w.r.t the encoder and the vector field parameters. This change has the following advantages, (1) We can generate samples conditioned on the variables learned from the data distribution $p_1(x)$, and (2) We can use pre-trained feature extractors and fine-tune them to optimize the CFM loss.



\section{Variational AutoEncoders}
\label{sec:VAE}
VAE is a popular deep latent variable model that assumes a latent variable $f$ is governing the data distribution $p_1(x_1) = \int p(f)p(x_1|f)df$. The posterior distribution $p(f|x)$ is intractable and hence is approximated by a variational distribution $q_{\lambda}(f|x)$ which is then learned by optimizing an ELBO:
\begin{align}\label{eq:loss_VAE}
    \mathcal{L}_{VAE} &= -\E_{p_{data}(x_1)} \big[\E_{q_{\lambda}(f|x_1)} \log p_{\psi}(x_1|f) + D_{KL}(q_{\lambda}(f|x_1)||p(f)) \big]
\end{align}
where, $q_{\lambda}(f|x_1)$ and $p_{\psi}(x_1|f)$ are parameterized by an encoder and a decoder neural network respectively and $p(f)$ is assumed to be $N(0,I)$. The variational distribution is commonly assumed to be multivariate Gaussian with mean $\mu$ and a diagonal covariance matrix $\sigma^2I$. The encoder network $q_{\lambda}(f|x_1)$ outputs the parameters $\mu$ and $\sigma^2$.  VAEs are successful in generative modeling applications in a variety of domains. However, they often suffer from low generation quality in high-dimensional problems.

\section{Darcy Flow Dataset}\label{appdix:darcy}
The Darcy flow equation describes the fluid flowing through porous media. With
a given permeability field \(K(x)\) and a source function \(f_s(x)\), the pressure
\(p(x)\) and velocity \(u(x)\) of the fluid, according to Darcy's law, are governed
by the following equations
\begin{equation}\label{eqn:darcy}
    \begin{aligned}
        u(x) &= -K(x)\nabla p(x), \quad  x \in \Omega\\
        \nabla \cdot u(x) &= f_s(x),\quad  x \in \Omega\\
        u(x) \cdot n(x) &= 0, \quad  x \in \partial \Omega\\
        \int_{\Omega}p(x)dx &= 0,
    \end{aligned}
\end{equation}
where \(\Omega\) denotes the problem domain and \(n(x)\) is the 
outward unit vector normal to the boundary. Following the problem 
set up in~\cite{jacobsen2025cocogen}, we set the source term as 
\begin{equation}
    f_s(x) = 
    \begin{cases}
        r, \quad \mid x_i - 0.5w\mid \leq 0.5w, \, i =1, 2\\
        -r, \quad \mid x_i -1 + 0.5w\mid \leq 0.2 w, \, i= 1,2\\
        0, \quad \text{otherwise}
    \end{cases}, 
\end{equation}
and sample \(K(x)\) from a Gaussian random field, \(K(x) = \exp(G(x)), \,
G(\cdot) \sim \mathcal{N}(\mu, k(\cdot, \cdot))\), where the covariance function
is \(k(x, x^{\prime}) = \exp(- \frac{\Vert x - x^{\prime}\Vert_2}{l})\). 

Using the finite difference solver, we created a dataset containing 10,000 pairs of
\([K,p]\) and trained I-CFM and \texttt{Latent-CFM} models to generate new
pairs. The generated samples are expected to follow (\ref{eqn:darcy}).
Therefore, we use the residual of the governing equation to evaluate a sample 
quality. In particular, for each generated sample, we compute
\begin{equation}\label{eqn:darcy_residual}
\begin{aligned}
    R(x) &= f_s(x) + \nabla \cdot [K(x) \nabla p(x)]\\
      &= f_s(x) + K(x)\frac{\partial^2p(x)}{\partial x_1^2} + \frac{\partial K(x)}{\partial x_1}\frac{\partial p(x)}{\partial x_1}\\
      &+ K(x)\frac{\partial^2p(x)}{\partial x_2^2} + \frac{\partial K(x)}{\partial x_2}\frac{\partial p(x)}{\partial x_2}.
\end{aligned}
\end{equation}
The partial derivatives in (\ref{eqn:darcy_residual}) are approximated using central finite differences.

We generated 500 \([K, p]\) pairs from trained I-CFM and \texttt{Latent-CFM} 
models and computed the residual according to (\ref{eqn:darcy_residual}) for evaluation.
Figure~\ref{fig:darcy_flow} shows sample examples of Darcy flow from the two models, and
Table \ref{tab:darcy} presents the comparison of the median of sample residuals between the two models.


\section{Datasets}
\label{sec:datasets}
We describe the details of the datasets used in this study.

\subsection{Triangle dataset}
The triangular dataset shares the same structural design as in \cite{pichler2022differentialentropyestimatortraining, nilssonremedi}. For any dimension $d > 1$, it is constructed as the $d$-fold product of a multimodal distribution with $k$ modes (as illustrated in Fig. 2 of \cite{pichler2022differentialentropyestimatortraining} for $k=10$), resulting in a distribution with $k^d$ modes. For the experiments in the main paper, we set $k=4$ and $d=2$, creating $16$ modes over the 2d plane.

\subsection{MNIST and CIFAR10}
We download and use MNIST \cite{lecun1998gradient} and CIFAR10 \cite{krizhevsky2009learning} datasets using the classes \texttt{torchvision.datasets.MNIST}, and \texttt{torchvision.datasets.CIFAR10} from the PyTorch library \cite{paszke2019pytorchimperativestylehighperformance} respectively. On MNIST, we normalize the data using the mean $[0.5,0.5]$ and the standard deviation $[0.5,0.5]$. On CIFAR10, we use random horizontal flipping of the data and normalize using the mean $[0.5,0.5,0.5]$, and the standard deviation $[0.5,0.5,0.5]$.


\subsection{Darcy Flow data}
The Darcy flow equation describes the fluid flowing through porous media. With
a given permeability field \(K(x)\) and a source function \(f_s(x)\), the pressure
\(p(x)\) and velocity \(u(x)\) of the fluid, according to Darcy's law, are governed
by the following equations
\begin{equation}\label{eqn:darcy}
    \begin{aligned}
        u(x) &= -K(x)\nabla p(x), \quad  x \in \Omega\\
        \nabla \cdot u(x) &= f_s(x),\quad  x \in \Omega\\
        u(x) \cdot n(x) &= 0, \quad  x \in \partial \Omega\\
        \int_{\Omega}p(x)dx &= 0,
    \end{aligned}
\end{equation}
where \(\Omega\) denotes the problem domain and \(n(x)\) is the 
outward unit vector normal to the boundary. Following the problem 
set up in~\cite{jacobsen2025cocogen}\footnote{We use the same data generation code available at \url{https://github.com/christian-jacobsen/CoCoGen/blob/master/data_generation/darcy_flow/generate_darcy.py}}, we set the source term as 
\begin{equation}
    f_s(x) = 
    \begin{cases}
        r, \quad \mid x_i - 0.5w\mid \leq 0.5w, \, i =1, 2\\
        -r, \quad \mid x_i -1 + 0.5w\mid \leq 0.2 w, \, i= 1,2\\
        0, \quad \text{otherwise}
    \end{cases}, 
\end{equation}
and sample \(K(x)\) from a Gaussian random field, \(K(x) = \exp(G(x)), \,
G(\cdot) \sim \mathcal{N}(\mu, k(\cdot, \cdot))\), where the covariance function
is \(k(x, x^{\prime}) = \exp(- \frac{\Vert x - x^{\prime}\Vert_2}{l})\) and \(G(x)=\mu +\sum_{i=1}^s\sqrt{\lambda_i}\theta_i\phi_i(x)\), where where \(\lambda_i\) and \(\phi_i(x)\) are eigenvalues and eigenfunctions of the covariance function sorted by decreasing \(\lambda_i\), and 
\(\theta_i \sim \mathcal{N}(0, \mathrm{I})\).

We sample permeability fields and solve for the pressure fields which results in
10,000 \([K, p]\) pairs with \(r=10\), \(w=0.125\), and \(s=16\) on \(64 \times 64\) grids for model
training. During training, we
standardize both the permeability and pressure fields using \(\mu_K= 1.1491\),
\(\sigma_K=7.8154\), and \(\mu_p=0.0\), \(\sigma_p=0.0823\).


\begin{algorithm}
\caption{\texttt{Latent-CFM} training}\label{alg:vae-cond}
\begin{algorithmic}[1]
\STATE Given $n$ sample $(x^1_1, ..., x^1_n)$ from $p_1(x)$, regularizer $\beta$;
\IF{no pretrained VAE available}
\STATE Train \texttt{VAE} using $(x^1_1, ..., x^1_n)$ optimizing Eq.~\ref{eq:loss_VAE}
\STATE Save the encoder $q_{\hat{\lambda}}(.|x_1)$ 
\ENDIF
\STATE Initialize $v_{\theta}(\cdot, \cdot,\cdot)$ and last encoder layer parameters $\lambda_{last}$
\FOR{$k$ steps}
\STATE Sample latent variables $f_i \sim q_{\lambda_{last}}(f|x^1_i)$ for all $i=1,...,n$
\STATE Sample $(x^0_1, ..., x^0_n)$ from $\mathcal{N}(0, I)$ and noise levels $(t_1,...,t_n)$ from $Unif(0,1)$ and compute $(u_{t_1}(.|x_0,x_1),...,u_{t_n}(.|x_0,x_1))$
\STATE compute $v_{\theta}(x_i^{t_i}, f_i,t_i)$ where $x_i^{t_i}$ is the corrupted $i$-th data at noise level ${t_i}$
\STATE Compute $\nabla \mathcal{L}_{\texttt{Latent-CFM}}$ and update $\theta, \lambda_{last}$
\ENDFOR
\STATE \textbf{return} $v_{\theta}(\cdot, \cdot,\cdot), q_{\hat{\lambda}}(.|x)$
\end{algorithmic}
\end{algorithm}

\begin{algorithm}[t]
\caption{\texttt{Latent-CFM} inference}\label{alg:vae-cond_inference}
\begin{algorithmic}[1]
\STATE Given sample size $K$, trained $v_{\hat{\theta}}(.,.,.)$ and $q_{\hat{\lambda}}(.|x_1)$, number of ODE steps $n_{ode}$
\STATE Select $K$ training samples $(x_1^{train},...,x_K^{train})$
\STATE Sample latent variables $f_i \sim q_{\hat{\lambda}}(f|x_i^{train})$ for all $i=1,...,K$
\STATE Sample $(x^0_1, ..., x^0_n)$ from $\mathcal{N}(0, I)$
\STATE $h \leftarrow \frac{1}{n_{ode}}$ 
\FOR{$t=0,h,...,1-h$ and $i=1,...,K$}
\STATE $x_i^{t+h} =$ ODEstep($v_{\hat{\theta}}(x_i^t,f_i,t),x_i^t$)
\ENDFOR
\STATE \textbf{return} Samples $(x_1^1,...,x_K^1)$
\end{algorithmic}
\end{algorithm}

\section{\texttt{Latent-CFM} with Gaussian Mixture Models}
\label{sec:lcfm_gmm}
Gaussian mixture models (GMM) use a mixture of Gaussian kernels to model the data distribution. An $M$ component GMM is defined as,
\begin{align}\label{eq:GMM}
    q_{\lambda}(x) = \sum_{j=1}^{M} w_j N(x; \mu_j,\Sigma_j), \quad \sum_{j=1}^{M}w_j = 1
\end{align}
where, $\lambda = \{\mu_j,\Sigma_j,w_j: j = 1,...,M\}$ are the GMM parameters. GMMs are popular in density estimation tasks and are consistent estimators of the entropy of a probability distribution under certain assumptions (Theorem 1 in \cite{pichler2022differentialentropyestimatortraining}). Since the mixture components are Gaussian, one can easily sample from a fitted GMM. However, estimation of GMMs requires a very large number of training samples for moderately large dimensional data and is prone to overfitting (Fig.1 in \cite{nilssonremedi}).

We explore GMM as an alternative to the VAE as a feature extractor in \texttt{Latent-CFM}. We follow \cite{pichler2022differentialentropyestimatortraining} to train the GMMs using the cross-entropy loss function,
\begin{align}\label{eq:loss_GMM}
    \mathcal{L}_{GMM} = - \E_{p_1(x)} \log q_{\lambda}(x)
\end{align}
Alg.~\ref{alg:GMM-cond} describes the \texttt{Latent-CFM} training using GMMs. First, we pretrain a GMM by optimizing Eq.~\ref{eq:loss_GMM}. Following \cite{jia2024structured}, during the CFM training, we assign each sample $x^1_i$ a cluster membership id $c_i$ based on the mixture component, which shows the maximum likelihood calculated for the data sample. These ids are passed to the learned vector field $v_\theta(.,.,.)$ as the conditioning variables. The rest of the training is similar to Alg. 1 of the main paper. Alg.~\ref{alg:GMM-cond} does not involve a finetuning of the encoder during the CFM training, therefore, we drop the KL term in the \texttt{Latent-CFM} loss and optimize Eq. 10 of the main paper. 

\begin{algorithm}
\caption{\texttt{Latent-CFM} w GMM training}\label{alg:GMM-cond}
\begin{algorithmic}[1]
\STATE Given $n$ sample $(x^1_1, ..., x^1_n)$ from $p_1(x)$;
\IF{no pretrained GMM available}
\STATE Train GMM using $(x^1_1, ..., x^1_n)$ optimizing Eq.~\ref{eq:loss_GMM}
\STATE Save the GMM $q_{\hat{\lambda}}(.)$ 
\ENDIF
\STATE Initialize $v_{\theta}(\cdot, \cdot,\cdot)$
\FOR{$k$ steps}
\STATE Calculate the cluster memberships $c_i = \underset{{j=1,..,M}}{argmax}(N(x^1_i;\hat{\mu_j},\hat{\Sigma}_j)), i = 1,...,n$ 
\STATE Sample $(x^0_1, ..., x^0_n)$ from $\mathcal{N}(0, I)$ and noise levels $(t_1,...,t_n)$ from $Unif(0,1)$ and compute $(u_{t_1}(.|x_0,x_1),...,u_{t_n}(.|x_0,x_1))$
\STATE compute $v_{\theta}(x_i^{t_i}, c_i,t_i)$ where $x_i^{t_i}$ is the corrupted $i$-th data at noise level ${t_i}$
\STATE Compute $\nabla \mathcal{L}_{\texttt{Latent-CFM}}$ in Eq. 10 (main paper) and update $\theta$
\ENDFOR
\STATE \textbf{return} $v_{\theta}(\cdot, \cdot,\cdot), q_{\hat{\lambda}}(.)$
\end{algorithmic}
\end{algorithm}

Alg.~\ref{alg:GMM-cond_inference} describes the inference steps using the \texttt{Latent-CFM} with GMM. Given a budget of $K$ samples, we draw the cluster membership ids $(c_1,...,c_K)$ from the distribution $Categorical(w_1,...,w_K)$. This step helps maintain the relative proportion of the clusters in the generated sample set according to the estimated GMM. The rest of the inference steps are similar to Alg. 2 from the main paper.
\begin{algorithm}
\caption{\texttt{Latent-CFM} w GMM inference}\label{alg:GMM-cond_inference}
\begin{algorithmic}[1]
\STATE Given sample size $K$, trained $v_{\hat{\theta}}(.,.,.)$ and $q_{\hat{\lambda}}(.)$, number of ODE steps $n_{ode}$
\STATE Select $K$ random cluster memberships $(c_1,...,c_K)$ from the distribution $Categorical(\hat{w}_1,...,\hat{w}_K)$ 
\STATE Sample $(x^0_1, ..., x^0_n)$ from $\mathcal{N}(0, I)$
\STATE $h \leftarrow \frac{1}{n_{ode}}$ 
\FOR{$t=0,h,...,1-h$ and $i=1,...,K$}
\STATE $x_i^{t+h} =$ ODEstep($v_{\hat{\theta}}(x_i^t,c_i,t),x_i^t$)
\ENDFOR
\STATE \textbf{return} Samples $(x_1^1,...,x_K^1)$
\end{algorithmic}
\end{algorithm}

\section{Implementation details}
\label{sec:implementation}
We provide implementation details of \texttt{Latent-CFM} and other methods used in the experiments section. The full codebase is available at \url{https://anonymous.4open.science/r/Latent_CFM-66CF/README.md}. We closely follow the implementation in the \cite{tong2024improving} repository\footnote{https://github.com/atong01/conditional-flow-matching.git}.

\paragraph{Synthetic data} All CFM training on the 2d triangle dataset has the same neural network architecture for the learned vector field. The architecture is a multi-layered perceptron (MLP) with three hidden layers with \texttt{SELU} activations. For I-CFM and OT-CFM, the input to the network is $(x_t, t)$, and it outputs the learned vector field value with the same dimension as $x_t$.

\texttt{Latent-CFM} and VRFM training have an additional VAE encoder that learns the latent representations during training. For VRFM, we consider the same MLP architecture used for the vector field as the VAE encoder, with the final layer outputs 2d mean and 2d log-variance vectors. The input to the encoder involves the tuple $(x_0,x_1,x_t,t)$ as recommended by \cite{guo2025variational}, which outputs the latent variable $z_t$ and is added to the input tuple $(x_t,z_t,t)$ and passed to the vector field network. The pretrained VAE in \texttt{Latent-CFM} has the same encoder and a one-hidden-layer MLP with \texttt{SELU} activation as a decoder. In the CFM training in \texttt{Latent-CFM}, we fix the pretrained VAE encoder and add a trainable layer, which predicts the mean and the log-variance. The VAE encoder in \texttt{Latent-CFM} takes the input $x_1$ and outputs the latent variable $f$, which is then added to the input tuple $(x_t,f,t)$ in the CFM training. For both VRFM and \texttt{Latent-CFM} we fix the KL regularization parameter $\beta=0.01$.

In \texttt{Latent-CFM} with GMM, we consider a diagonal covariance matrix $\Sigma_j = \sigma_j^2I$ for each Gaussian component and set the number of components $K=16$. The CFM architecture is the same as above.

\paragraph{MNIST and CIFAR10} All models used the same U-Net architecture from \cite{tong2024improving} on MNIST and CIFAR10. The hyperparameters of the learned vector field are changed depending on the dataset. For I-CFM and OT-CFM, the model takes the input $(x_t,t)$ where both variables are projected onto an embedding space and concatenated along the channel dimension and passed through the U-Net layers to output the learned vector field.

In \texttt{Latent-CFM}, we use the pretrained VAE model available for MNIST\footnote{https://github.com/csinva/gan-vae-pretrained-pytorch}, and CIFAR10\footnote{https://github.com/Lightning-Universe/lightning-bolts/}. We take the latent encodings from the last encoder layer and add a trainable MLP layer to output the mean and log-variance of the latent space. Using the reparameterization trick \cite{kingma2022autoencodingvariationalbayes}, we sample the latent variable and project it to the embedding space of the CFM model using a single trainable MLP layer. These feature embeddings are added (see Fig. 2 of the main paper) to the time embeddings and passed to the U-Net. For encoder joint training with \texttt{Latent-CFM}, we keep the same architecture as in the above codebases.

We have also implemented the VRFM model on the CIFAR10 dataset. Following \cite{guo2025variational}, we adopted the downsampling layers of the vector field network to reduce the input to a latent of size $(512,1)$. The addition of the latents to the vector field network follows the same architecture as described in Fig.~\ref{fig:schematic}.  

\paragraph{Darcy Flow} We follow the same model architecture used for CIFAR10 for the I-CFM training in the Darcy Flow dataset. On this dataset, for \texttt{Latent-CFM}, we train a VAE encoder following \cite{rombach2022highresolutionimagesynthesislatent} along with the CFM training. The VAE encoder has $3$ downsampling layers, bringing down the spatial dimension from $[64,64]$ to $[8,8]$ in the latent space. The channel dimension was successively increased from $2$ to $128$ using the sequence $[16, 32, 64, 128]$ and then reduced to $8$ in the final encoder layer. The final latent encodings are flattened and added to the vector field, similar to the CIFAR10 training. We fix the KL regularization parameter $\beta=0.001$.

\paragraph{ImageNet} We follow the same SiT model architecture from \cite{ma2024sitexploringflowdiffusionbased} for our experiments in ImageNet. SiT is a latent flow matching model. For the \texttt{Latent-CFM}, we pretrained a VAE following \cite{rombach2022highresolutionimagesynthesislatent} as described above on the latents of the training data for 200K steps. For reproducibility, we fixed all hyperparameters of the CFM training the same as \cite{ma2024sitexploringflowdiffusionbased}.  

Additional hyperparameter details are presented in Table~\ref{tab:hyperparams}. In addition, following \cite{tong2024improving}, we set the variance of the simulated Gaussian probability path $\sigma_t = 0.01$ for MNIST and $0$ for CIFAR10 and Darcy Flow dataset.
\begin{table*}[t]
\center
\begin{tabular}{|c|c|c|c|}
\hline
\textbf{Hyperparameters} & \textbf{MNIST} & \textbf{CIFAR-10} & \textbf{Darcy Flow} \\
\hline
Train set size & 60,000 & 50,000 & 10,000 \\

\# steps & 100K & 600K & 100K \\

Training batch size & 128 & 128 & 128  \\

Optimizer & Adam & Adam & Adam  \\

Learning rate & 2e-4 & 2e-4 & 2e-4 \\

Latent dimension & 20 & 256 & 256 \\

number of model channels & 32 & 128 & 128 \\

number of residual blocks & 2 & 2 & 2 \\

channel multiplier & [1,2,2] & [1, 2, 2, 2] & [1, 2, 2, 2] \\

number of attention heads & 1 & 4 & 4 \\

dropout & 0 & 0.1 & 0.1 \\
\hline
\end{tabular}
\vspace{0.2cm}
\caption{Hyperparameter settings used for the experiments on benchmark datasets.}
\label{tab:hyperparams}
\end{table*}

\section{Metrics}
\label{sec:metrics}

\subsection{Wasserstein metric}
Given two batches of samples $(x,y)$, the Wasserstein distance was calculated using the \texttt{SamplesLoss} function with the sinkhorn algorithm from the \texttt{geomloss}\footnote{https://www.kernel-operations.io/geomloss/} Python library.

\subsection{Residual metric on Darcy Flow data}
After rescaling to the original space, we compute the spatially averaged squared residuals of the governing equation across the domain to evaluate the sample 
quality in the 2D Darcy flow experiment. In particular, for each generated sample, we compute
\begin{equation}\label{eqn:darcy_residual}
\begin{aligned}
    R(x) &= \frac{1}{N^2}\Vert f_s(x) + \nabla \cdot [K(x) \nabla p(x)]\Vert_2^2\\
      &= \frac{1}{N^2}\Vert f_s(x) + K(x)\frac{\partial^2p(x)}{\partial x_1^2} + \frac{\partial K(x)}{\partial x_1}\frac{\partial p(x)}{\partial x_1}
      + K(x)\frac{\partial^2p(x)}{\partial x_2^2} + \frac{\partial K(x)}{\partial x_2}\frac{\partial p(x)}{\partial x_2}\Vert_2^2,
\end{aligned}
\end{equation}

where \(N\) is the number of discretization locations in each spatial dimension, \(x_1\) and \(x_2\) represent the spatial coordinates of the domain, and
the partial derivatives in (\ref{eqn:darcy_residual}) are approximated using
central finite differences. 
\begin{figure*}
\centering
\begin{subfigure}{0.4\textwidth} 
    \centering
    \includegraphics[width=\textwidth]{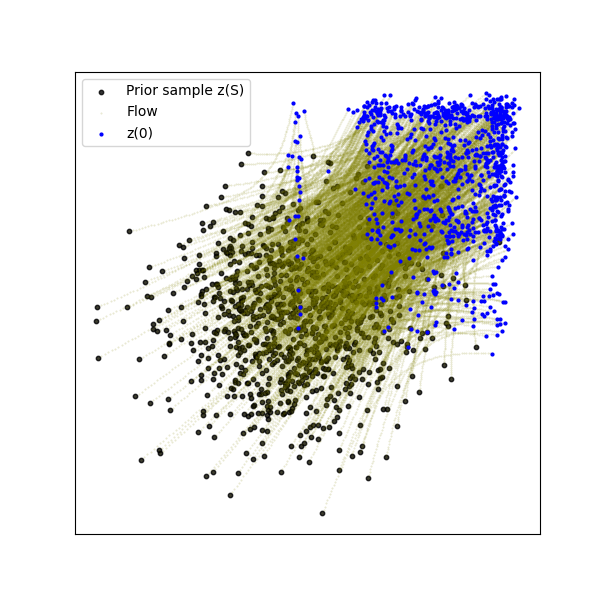}
    \caption{\texttt{Latent-CFM}}
    \label{fig:vrfm_comp_latCFM}
\end{subfigure}
\begin{subfigure}{0.4\textwidth}
    \centering
\includegraphics[width=\textwidth]{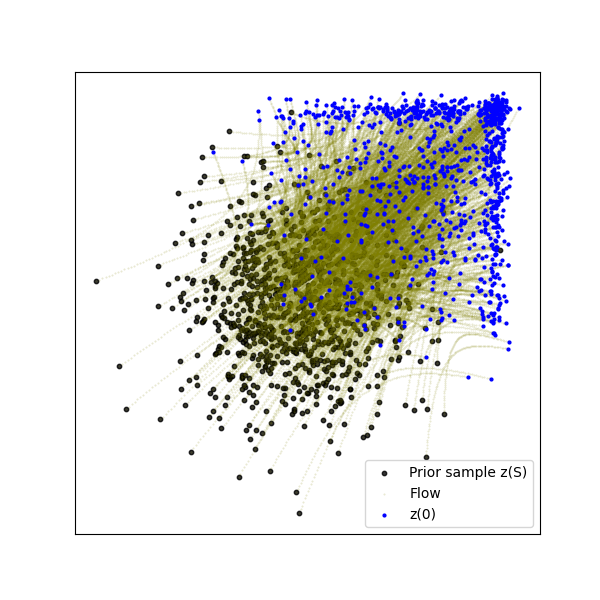}
    \caption{VRFM}
    \label{fig:vrfm_comp_VRFM}
\end{subfigure}
    \caption{Plot shows the generated trajectories from the source to target distribution for (a) \texttt{Latent-CFM}, and (b) VRFM model. By changing the input to the encoder, \texttt{Latent-CFM} generates samples with an improved multimodal structure similar to the data than the VRFM.}
    \label{fig:VRFM_comparison}
\end{figure*}

\section{Selection of $\beta$}
\label{sec:sel_beta}
\begin{wrapfigure}{r}{0.5\linewidth}
    \centering
    \includegraphics[width=\linewidth]{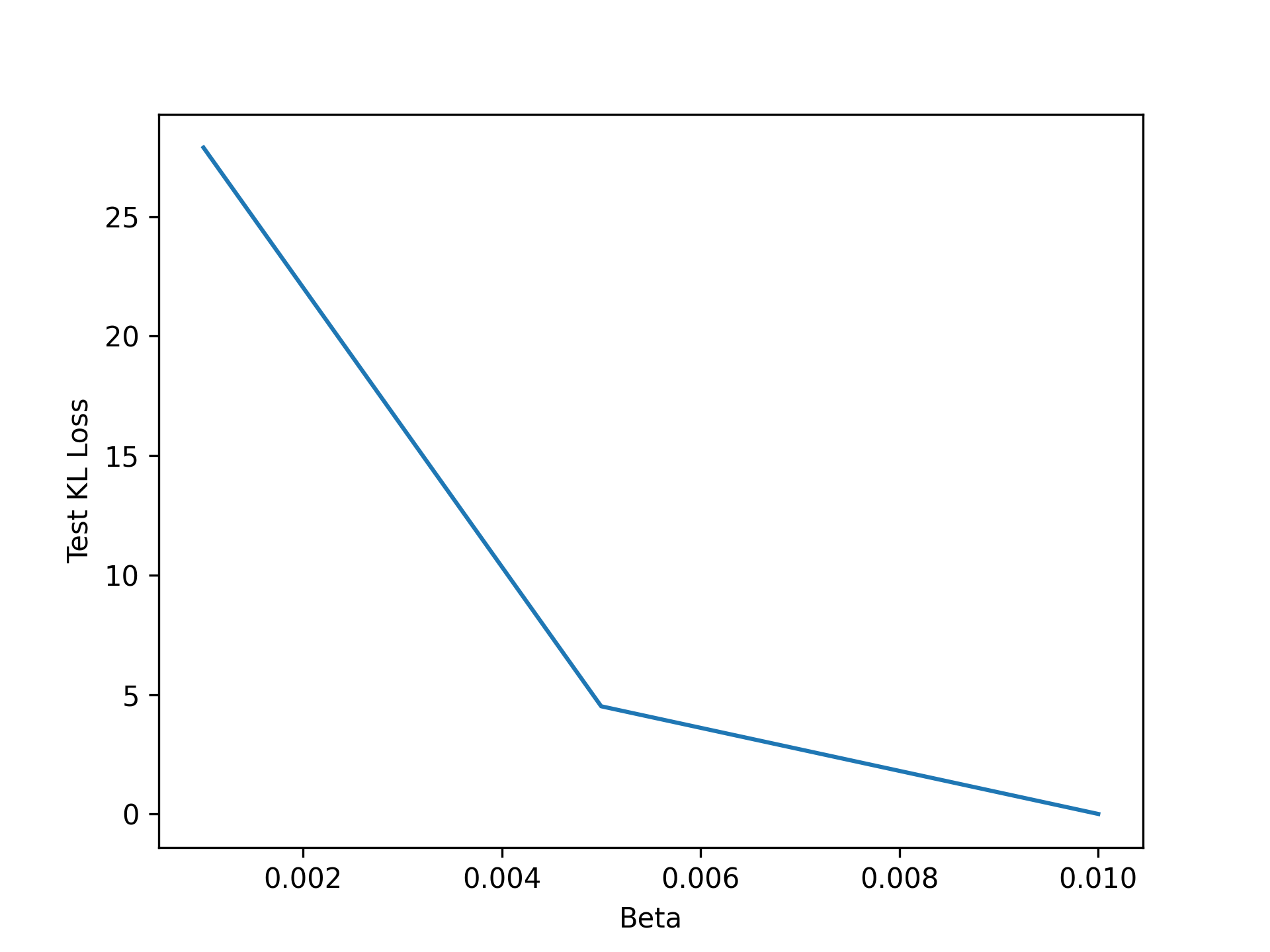}
    \caption{Plot of the regularization parameter $\beta$ and the KL divergence on test set shows an optimal $\beta$ around $5e-3$ on MNIST.}
    \label{fig:beta_testKL}
\end{wrapfigure}
An important hyperparameter of \texttt{Latent-CFM} training is the KL regularization parameter $\beta$. It controls the information compression of the latent space \cite{alemi2019deepvariationalinformationbottleneck}. In \texttt{Latent-CFM}, too high $\beta$ results in the posterior collapsing to the prior (KL term $=0$), and too low a value results in the model only learning to reconstruct the training data. In Fig.~\ref{fig:beta_testKL}, we plot the KL term evaluated on the MNIST test dataset vs three $\beta$ values $[0.001,0.005,0.010]$.
We observe an optimal region around $0.005$. On CIFAR10 and Darcy Flow, we observe a good tradeoff between the generation and reconstruction for $\beta=0.001$. We leave a formal strategy for selecting good $\beta$ for \texttt{Latent-CFM} as an interesting future research direction.

\section{Comparison with VRFM}
\label{sec:comp_vrfm}
\texttt{Latent-CFM} loss function is similar to the recently proposed variational rectified flow matching (VRFM) \cite{guo2025variational}. We like to highlight a subtle but key difference between the two methods. VRFM training requires the encoder model to learn the latent $z_t$ from a time-varying input tuple $(x_0,x_1,x_t)$. This is required since the modeling assumption in VRFM is that the vector field $u_t$ is a mixture model for all $t$. \texttt{Latent-CFM} simplifies the model training by modeling the data $x_1$ as a mixture model Eq.7 in the main paper. The encoder only requires the data $x_1$ as input to extract the latent features.
\begin{table}[t]
\centering 
\begin{tabular}{|c|c|}
   \hline
    \textbf{Method} & \textbf{CIFAR10 FID} ($\downarrow$) \\
	\hline
    VRFM-1 \cite{guo2025variational} & 3.561 \\
    VRFM-2 \cite{guo2025variational} & \textbf{3.478} \\
    \hline
    VRFM $(x_0,x_1,x_t,t)$ & 16.276 \\
    VRFM $(x_1,t)$ & 6.480 \\
    \texttt{Latent-CFM} (pretrained) & \textbf{3.514} \\
   \hline
   \end{tabular}
   \caption{Comparison between VRFM and \texttt{Latent-CFM} in terms of FID on CIFAR10. We were unable to reproduce the FID numbers from \cite{guo2025variational} presented in the top two rows. \texttt{Latent-CFM} shows a similar FID to the best VRFM model from \cite{guo2025variational}. Additionally, we observe performance improvements as we simplify the input to VRFM in our implementation.}
    \label{tab:vrfm_cifar}
\end{table}

Fig.~\ref{fig:VRFM_comparison} shows the generated trajectories from the source to target distribution for (a) \texttt{Latent-CFM}, and (b) VRFM model trained on the 2d triangle dataset. For ease of comparison, we train the VAE encoder in \texttt{Latent-CFM} along with the CFM training, similar to VRFM. The only difference between the two models lies in the input to the encoder, which for VRFM is $(x_0,x_1,x_t,t)$, and for \texttt{Latent-CFM} is $x_1$. We observe that this change helps \texttt{Latent-CFM} to learn the multimodal structure of the data better than the VRFM. This is perhaps due to the violation of the VRM modeling assumption that the vector field random variable $u_t$ is multimodal for all $t$, which is approximately true when $t$ is close to $1$.

\section{Generalization of \texttt{Latent-CFM}}
\label{sec:generalization}
\begin{table*}
    \centering
    \resizebox{0.9\textwidth}{!}{
    \begin{tabular}{|c|c|c|c|c|c|}
    \hline
      \textbf{Methods} & \textbf{\# Params.}  & \textbf{Sinkhorn} ($\uparrow$) & \textbf{Energy} ($\uparrow$) & \textbf{Gaussian} ($\uparrow$) & \textbf{Laplacian} ($\uparrow$) \\
      \hline
     ICFM  & 35.8M & 675.613 & 12.690 & \textbf{0.0020} & \textbf{0.0018}\\
     \hline
     \texttt{Latent-CFM}  & 36.1M & \textbf{680.500} & \textbf{12.698} & \textbf{0.0020} & \textbf{0.0018}\\
     \hline
    \end{tabular}
    }
    \caption{Table shows the generalization for ICFM and \texttt{Latent-CFM} in terms of different distances (average over 30 batches) from the train dataset on CIFAR10. In terms of most metrics, the two approaches show similar distance from the train dataset.
    }
    \label{tab:generalization}
\end{table*}
\texttt{Latent-CFM} sampling uses latent features $f$ learned from training data samples $q(.|x^{train})$. 
This is uncommon for flow-based modeling and most other deep generative model frameworks, since the model has an additional, non-parametric component that models random features of the data.
Note that there are many non-parametric models that directly use the training set during sampling, e.g. kernel density estimation and Gaussian processes \cite{bishop2006pattern}, while neural networks are sometimes seen as non-parametric, due to their overparametrization and their memorization capabilities, manifested by compressing/memorizing the training data into their weights \cite{zhang2016understanding, arora2018stronger}.
In practice, deep generative models may also memorize features (or even entire samples in the overfitting regime) \cite{gu2023memorization, somepalli2023understanding}.
To assess that our model does not cheat, i.e. copy too much information from the training sample, we perform additional comparisons between generated samples, their conditioning training sample, and the train/test sets at large.

To investigate the generalization of the generated samples beyond the training data, Fig.~\ref{fig:NN} compares the 10 nearest neighbors of 10 \texttt{Latent-CFM} generated samples from the train and test data, and the training images used to generate the samples on CIFAR10. In most cases, we observe that the generated samples, although they share features with, significantly differ from nearest neighbors in the train and test data. This demonstrates that the generated samples don't reconstruct the training data. In addition, for most samples, the training data point used for feature extraction does not appear (except for the red box) within the 10 nearest neighbor samples in the train dataset, demonstrating generalization beyond the training set. 

To quantify generalization beyond training data, Table~\ref{tab:generalization} shows the distance metrics (averaged over 30 batches of size 500) between generated samples using ICFM and \texttt{Latent-CFM} and the training dataset on CIFAR10. We have used the \texttt{geomloss} Python package to calculate the distances. In terms of most distances, both ICFM and \texttt{Latent-CFM} show similar generalization. Additionally, Table~\ref{tab:test_fid} shows FID for ICFM and \texttt{Latent-CFM} for three training checkpoints calculated on the CIFAR10 test dataset. We observed that our approach shows better FID than ICFM across all checkpoints.  

\begin{figure*}
    \centering
    \includegraphics[width=0.9\linewidth]{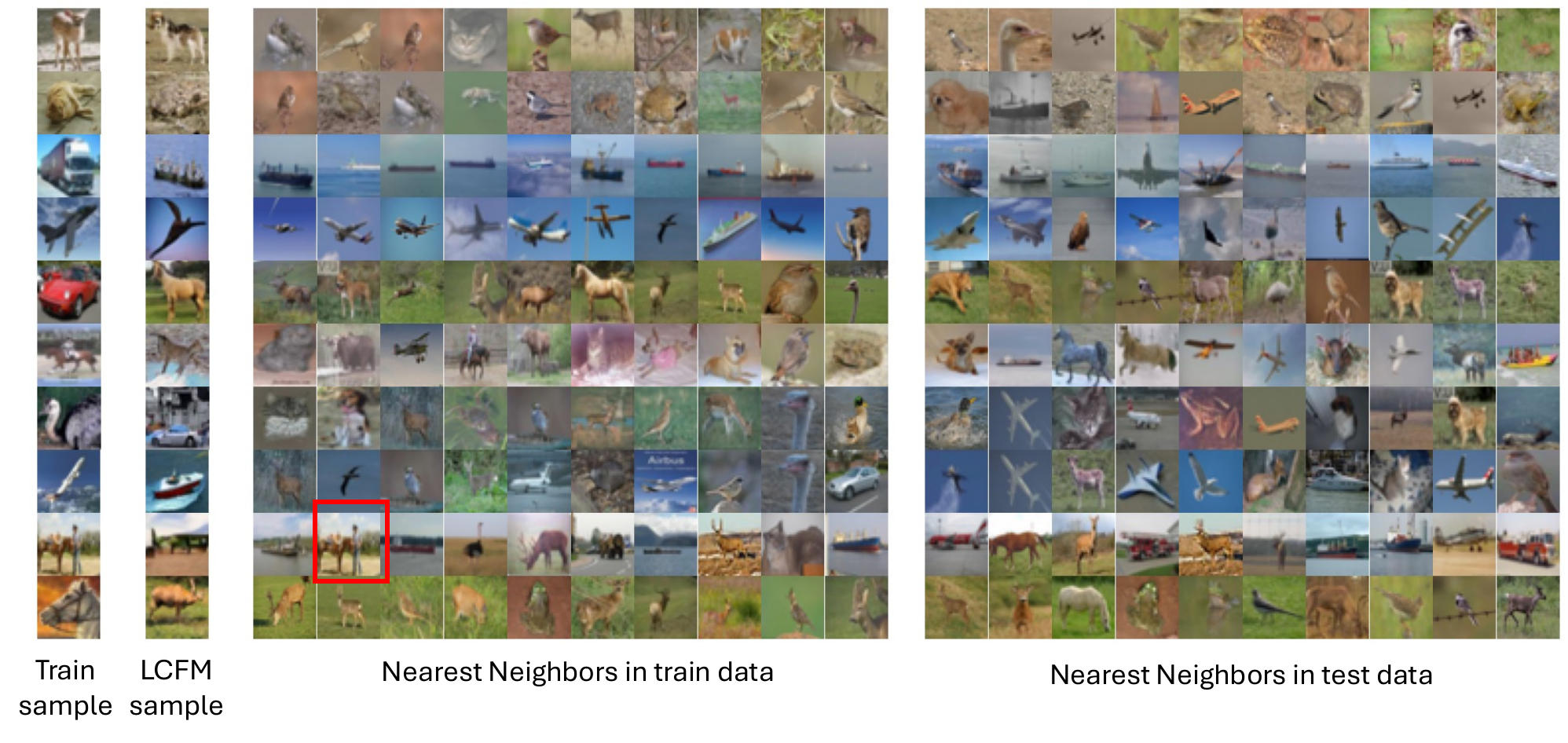}
    \caption{Plot shows 10 nearest neighbors of \texttt{Latent-CFM} samples on the train and test dataset with the training images used for feature extraction (on left). \texttt{Latent-CFM} generated samples generalize well, although they share features with the nearest neighbors in the train data. For one sample, one of the nearest neighbors (marked red) matches the training samples used for feature extraction.}
    \label{fig:NN}
\end{figure*}
\begin{figure*}
    \centering
    \begin{subfigure}{0.48\textwidth}
        \centering
        \includegraphics[width=\linewidth]{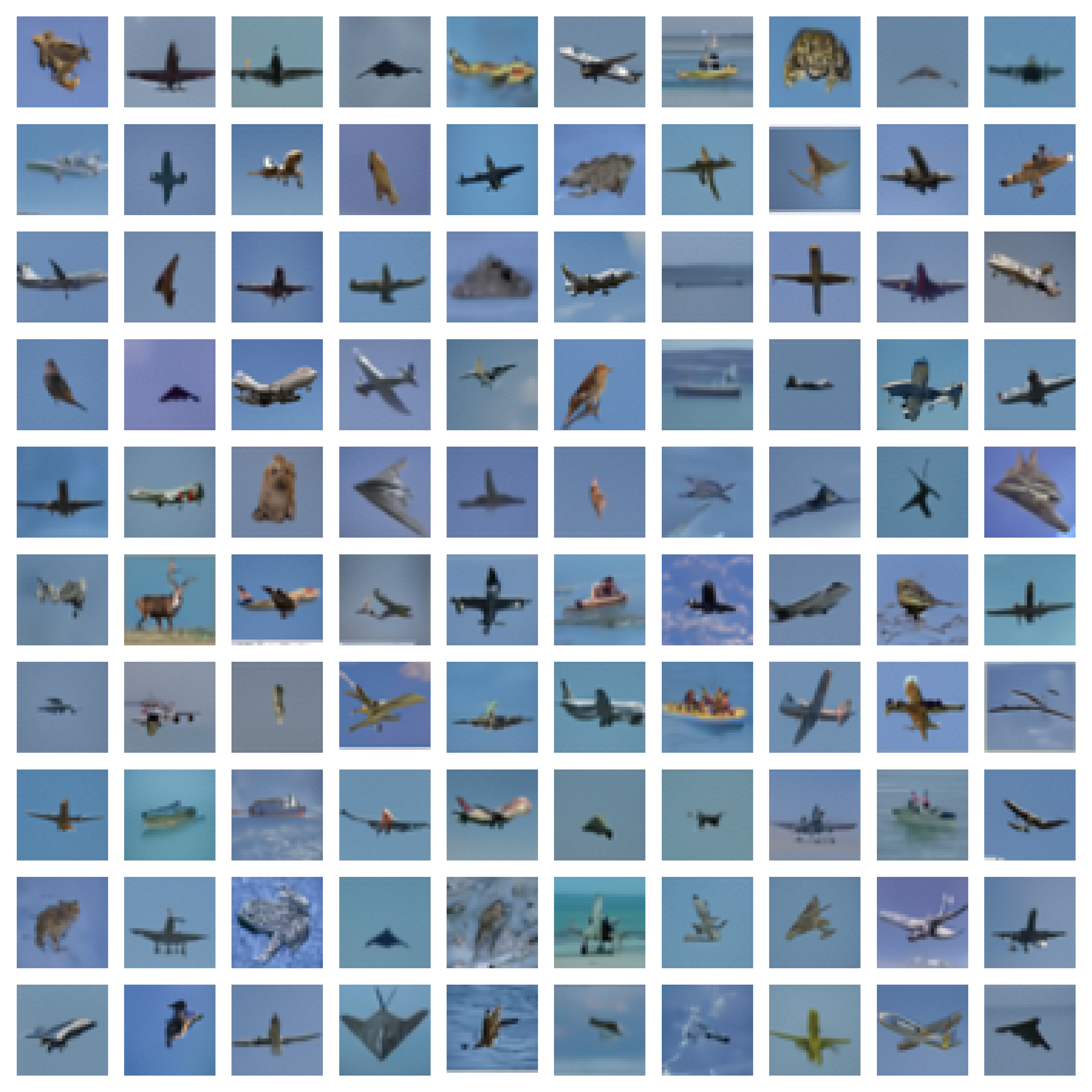}
        \caption{$f_1 \sim q(.|x^{truck}), f_1 \sim q(.|x^{airplane})$}
    \end{subfigure}
    \begin{subfigure}{0.48\textwidth}
        \centering
        \includegraphics[width=\linewidth]{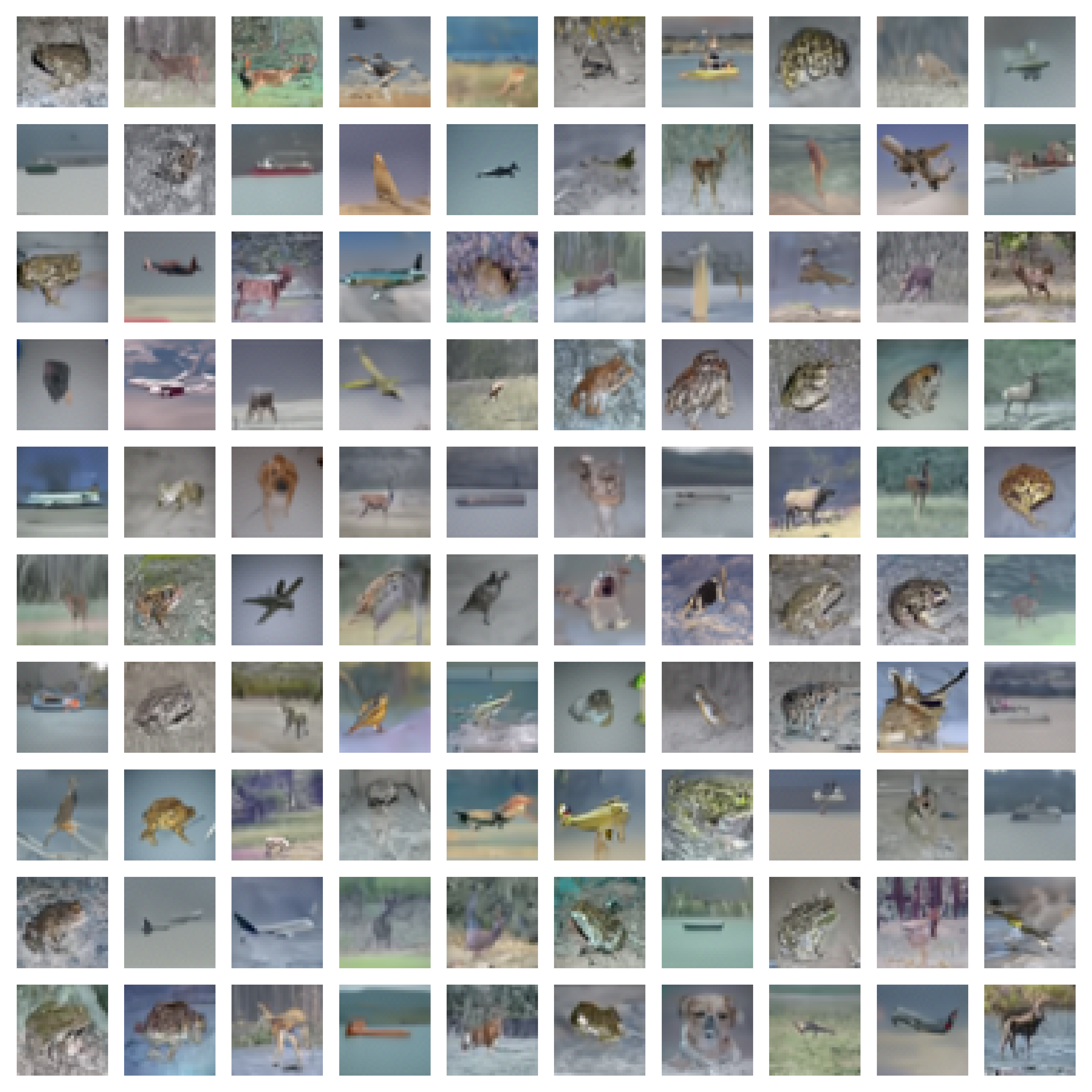}
        \caption{$f_1 \sim q(.|x^{deer}), f_1 \sim q(.|x^{airplane})$}
    \end{subfigure}
    \caption{Expanded set of 100 samples from the two product distributions.}
    \label{fig:composition_100}
\end{figure*}

\section{Latent space traversal on MNIST}
\begin{figure}[t]
    \centering
    \includegraphics[width=\linewidth]{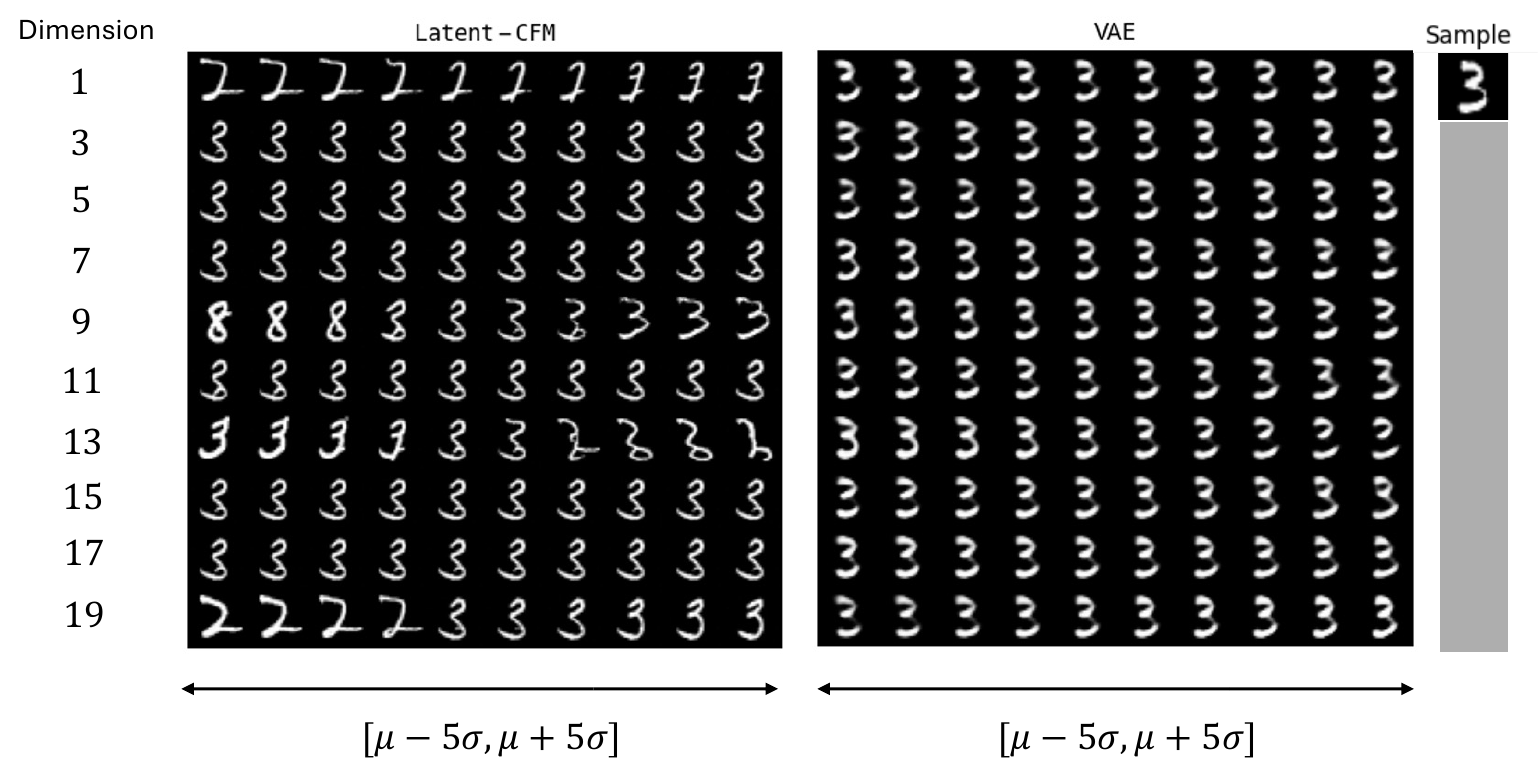}
    \caption{Traversal of the latent space $f$ shows that \texttt{Latent-CFM} generates a diverse set of samples than the pretrained VAE. All generated images for \texttt{Latent-CFM} share the same source samples $x_0 \sim N(0,I)$. The latent space traversal shows we can generate different digits with similar latent structures.}
    \label{fig:VAE_latent}
\end{figure}

\begin{table}[t]
\centering  
\begin{tabular}{|c|c|c|c|}
   \hline
   \multirow{2}{1.5cm}{\textbf{Methods}}  & \multicolumn{3}{c|}{\textbf{CIFAR10 testset FID} ($\downarrow$)} \\
   \cline{2-4} &\textbf{300K} & \textbf{400K} & \textbf{600K} \\
	\hline
	ICFM & 5.769 & 5.742  & 5.652 \\
        \texttt{Latent-CFM} (pretrained) & \textbf{5.710} & \textbf{5.563} & \textbf{5.631} \\
   \hline
   \end{tabular}
   \caption{\texttt{Latent-CFM} compared to I-CFM in terms of testset FID on CIFAR10 for three training checkpoints. Our method exhibits improved FID over ICFM for all checkpoints.}
    \label{tab:test_fid}
\end{table}


Fig.~\ref{fig:VAE_latent} shows the latent space traversal for the pretrained VAE model and the \texttt{Latent-CFM} model, which augments the feature learned by the VAE encoder according to Alg.~\ref{alg:vae-cond}. We obtain the latent variables from data samples and generate new samples by perturbing the odd coordinates ($1,3,...,19$) of the $20$-dimensional latent space within the range $[\mu-5\sigma,\mu+5\sigma]$, where $(\mu,\sigma)$ represent the encoded mean and variance.
For each row, the new samples correspond to perturbing only one coordinate of the data sample latent.
For all generated images with \texttt{Latent-CFM}, we fix the samples $x_0$ from the source distribution. We observe that \texttt{Latent-CFM} generates images with significantly more variability and better quality than the baseline VAE model. \texttt{Latent-CFM} has generated the same digit (the digit 3) but with different styles. In addition, we observe that perturbing certain latent coordinates (for example, coordinate 9,19) generates different digits with similar structure.

\section{Composition of feature-conditioned distributions}
\label{sec:compose_details}
This section describes the algorithm and derivations for our sampling algorithm from the product of two feature-conditioned \texttt{Latent-CFM} models. The key ingredient involves the relation between the vector field $u_t(.)$ and the score $\nabla \log p_t(.)$ where $p_t(.)$ is the probability path at time $t$. The following Lemma from \cite{zheng2023guidedflowsgenerativemodeling} describes this relation for the Gaussian probability paths.

\begin{lemma}\label{lemma1}
    Let $p_t(x|x_1) = N(x|\alpha_tx_1,\sigma_t^2I)$ be a Gaussian Path defined by a scheduler $(\alpha_t,\sigma_t)$, then its generating vector field $u_t(x|x_1)$ is related to the score function $\nabla\log p_t(x|x_1)$ by,
    \begin{equation}\label{eq:score_vector}
        u_t(x|x_1) = a_tx + b_t\nabla \log p_t(x|x_1)
    \end{equation}
    where,
    \begin{equation}\label{eq:score_vector_params}
        a_t=\frac{\dot{\alpha_t}}{\alpha_t}, b_t=(\dot{\alpha_t}\sigma_t - \alpha_t\dot{\sigma_t})\frac{\sigma_t}{\alpha_t}
    \end{equation}
\end{lemma}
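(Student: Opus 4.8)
The goal is to derive the linear relation \eqref{eq:score_vector} between the conditional vector field $u_t(x \mid x_1)$ and the conditional score $\nabla \log p_t(x \mid x_1)$ for a Gaussian path $p_t(x \mid x_1) = N(x \mid \alpha_t x_1, \sigma_t^2 I)$. The plan is to invoke the known closed form for the vector field that generates a Gaussian marginal path and then match it against an explicit computation of the score, reading off the coefficients $a_t$ and $b_t$ by comparison. Everything reduces to two explicit formulas for a Gaussian.

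First I would recall, from \eqref{eq:normal_vec_field} in the background section (the result of \cite[Theorem~3]{lipman2023flow} / \cite[Theorem~2.1]{tong2024improving}), that for the Gaussian path with mean $\mu_t = \alpha_t x_1$ and standard deviation $\sigma_t$, the generating vector field is
\begin{equation}
    u_t(x \mid x_1) = \frac{\dot{\sigma}_t}{\sigma_t}\bigl(x - \alpha_t x_1\bigr) + \dot{\alpha}_t x_1,
\end{equation}
where I write $\dot{\alpha}_t, \dot{\sigma}_t$ for the time derivatives. Next I would compute the score directly: since $\log p_t(x \mid x_1) = -\frac{1}{2\sigma_t^2}\|x - \alpha_t x_1\|^2 + \text{const}$, we have
\begin{equation}
    \nabla \log p_t(x \mid x_1) = -\frac{1}{\sigma_t^2}\bigl(x - \alpha_t x_1\bigr).
\end{equation}
This gives the exact substitution $x - \alpha_t x_1 = -\sigma_t^2 \nabla \log p_t(x \mid x_1)$, which is the single key identity.

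The final step is pure algebra: substitute $x - \alpha_t x_1 = -\sigma_t^2 \nabla \log p_t$ and $x_1 = (x - (x - \alpha_t x_1))/\alpha_t = (x + \sigma_t^2 \nabla \log p_t)/\alpha_t$ into the expression for $u_t$, then collect the terms proportional to $x$ and those proportional to $\nabla \log p_t(x \mid x_1)$. Grouping the $x$-coefficient yields $a_t = \dot{\alpha}_t/\alpha_t$, and grouping the score-coefficient yields $b_t = (\dot{\alpha}_t \sigma_t - \alpha_t \dot{\sigma}_t)\,\sigma_t/\alpha_t$, matching \eqref{eq:score_vector_params}. Since there is no genuine obstacle here—the whole statement is a rearrangement of two Gaussian identities—the only point requiring care is the bookkeeping: one must consistently eliminate $x_1$ in favor of $x$ and the score so that the resulting expression is affine in $x$ with no residual $x_1$ dependence, which is exactly what makes the coefficients $a_t, b_t$ well-defined functions of $t$ alone.
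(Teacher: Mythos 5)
Your derivation is correct: combining the closed-form Gaussian vector field from Eq.~\eqref{eq:normal_vec_field} with the Gaussian score identity $\nabla \log p_t(x\mid x_1) = -(x-\alpha_t x_1)/\sigma_t^2$ and eliminating $x_1$ yields exactly $a_t = \dot{\alpha}_t/\alpha_t$ and $b_t = (\dot{\alpha}_t\sigma_t - \alpha_t\dot{\sigma}_t)\sigma_t/\alpha_t$, and your coefficients are consistent with the paper's later specialization $\alpha_t = t$, $\sigma_t = 1-t$, giving $a_t = 1/t$, $b_t = (1-t)/t$. Note that the paper itself does not prove this lemma (it imports it from \cite{zheng2023guidedflowsgenerativemodeling}), so there is no in-paper argument to diverge from; your proof is the standard self-contained derivation and fills that gap correctly.
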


In this paper, we used linear interpolation paths in \texttt{Latent-CFM} $x_t = tx_1 + (1-t)x_0$, where $x_0 \sim N(0,I)$. It is easy to show that, $a_t = \frac{1}{t}, b_t = \frac{1-t}{t}$. Therefore, we can use Eq.~\ref{eq:score_vector} to convert the learned vector field $v_{\hat{\theta}}(.;.,t)$ to score estimator $s_{\hat{\theta}}(.;.,t)$.

We aim to draw samples from the product probability path $p^1_t = \prod_{i=1,2} p_t(x|f_i)$ where $f_i \sim q(.|x_i)$ and $(x_1, x_2)$ are two images. Using Lemma~\ref{lemma1}, we can derive the vector field underlying the product, $u^1_t(x) = -a_t x + u_t(x|f_1) + u_t(x|f_2)$. During the reverse ODE sampling, we replace the true vector fields with their learned networks, 
\begin{equation}\label{eq:compose_vec}
v^1(x;.,t) = -a_t x + v_{\hat{\theta}}(x;f_1,t) + v_{\hat{\theta}}(x;f_2,t)    
\end{equation}
With these derivations, we perform the predictor-corrector sampling using Langevin dynamics from \cite{song2020score} to sample from the product distribution. Algorithm~\ref{alg:compose_sampling} describes the steps of the sampling process.

\begin{algorithm}[t]
\caption{Sampling from product of feature-conditioned models}\label{alg:compose_sampling}
\begin{algorithmic}[1]
\STATE trained $v_{\hat{\theta}}(.,f_i,.)$ where $f_i\sim q_{\hat{\lambda}}(.|x_i)$, number of ODE steps $n_{ode}$, number of Langevin steps $n_\ell$, drift and diffusion schedulers $(\epsilon_t^{drift}, \epsilon_t^{diffusion})_{t\in[0,1]}$ 
\STATE Sample $x_0 \sim N(0, I)$
\STATE $h \leftarrow \frac{1}{n_{ode}}$ 
\FOR{$t=0,h,...,1-h$}
\STATE $x_{t+h} =$ ODEstep($v^1(x_t;.,t),x_t$) \RightComment{\textbf{Predictor step}}
\FOR{$j=1,...,n_\ell$}
\STATE $z \sim N(0,I)$
\STATE $x_{t+h} = x_{t+h} + \epsilon_{t+h}^{drift} s_{\hat{\theta}}(x_{t+h};.,t) + \sqrt{2\epsilon_{t+h}^{diffusion}}z$ \RightComment{\textbf{Corrector step}}
\ENDFOR
\ENDFOR
\STATE \textbf{return} $x_1$
\end{algorithmic}
\end{algorithm}

As stated in Section \ref{sec:latent_analysis}, assuming conditionally independent latent variables \(f_1\) and \(f_2\) given \(x\), we can construct a new distribution. Repeatedly using Bayes theorem, we can obtain  
\begin{equation*}
   p(x|f_1, f_2) = \frac{p(x|f_1)p(x|f_2)}{p(x)}\cdot\frac{p(f_1)p(f_2)}{p(f_1, f_2)} \propto  \frac{p(x|f_1)p(x|f_2)}{p(x)}.
\end{equation*}
Such construction requires a marginal data distribution \(p(x)\) where
\(p(x|f_1, f_2)\) is a proper conditional. This means such \(p(x)\) allows
\(f_1\) and \(f_2\) to exist simultaneously for some \(x = x^\prime\). However,
without a clear semantic meaning or disentanglement of the latent space, \(p(x|f_1, f_2)\)
is not well-defined. This might explain the unresolved center objects in the
samples from the composed distribution.

To perform an effective composition in this case,
\cite{bradley2025mechanismsprojectivecompositiondiffusion} shows that
\(p(x|f_1)\) and \(p(x|f_2)\) need to be independent. Therefore, it asks the
latent variables to represent orthogonal concepts in \(x\), which prompts the 
future direction of learning a latent space encapsulating mutually independent 
features of the input data for better compositional generation.

\section{Additional results on Darcy Flow data}
\label{sec:additional_darcy}
\begin{figure}[t]
    \centering
    \includegraphics[width=\linewidth]{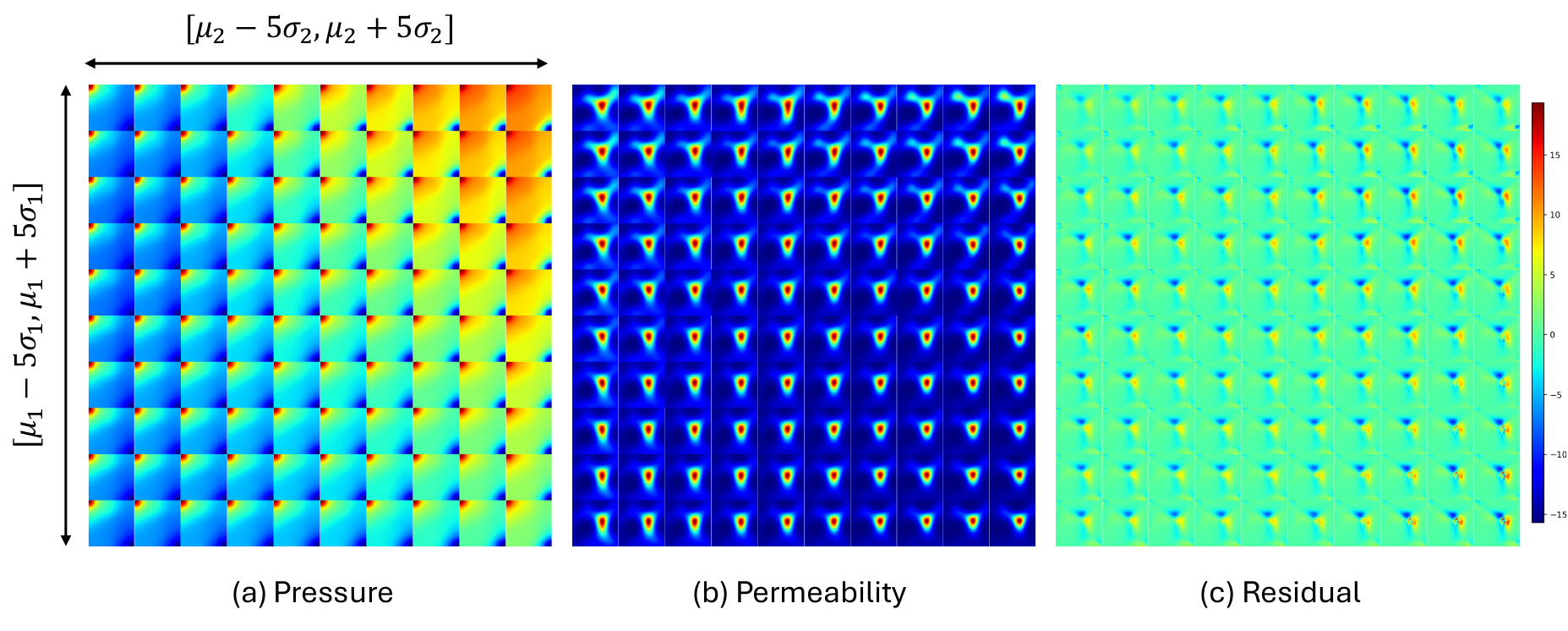}
    \caption{Plot shows the generated pressure and permeability fields and the residual measuring physical accuracy of generation from varying the two latent dimensions within $[\mu-5\sigma, \mu+5\sigma]$. We observe that traversing the latent space produces a variety of samples for both fields. In addition, the variation in the latent space seems to maintain the physical consistency of the samples in terms of the residual. All images share the same source samples $x_0$.}
    \label{fig:lat_traversal_darcy}
\end{figure}


In Fig.~\ref{fig:lat_traversal_darcy}, we plot generated fields and the residual metric by varying the two dimensions of the latent variable within the range $[\mu_i-5\sigma_i, \mu_i+5\sigma_i], i =1,2$, where $i$ denotes the dimension, and $(\mu_i, \sigma_i)$ denotes the mean and standard deviation respectively. We fix the same sample $x_0 \sim N(0,I)$ from the source distribution for all generations. We observe that traversing the latent space manifold has produced a variety of generated fields for both variables. In addition, \texttt{Latent-CFM} with varying latent variables seems to have generated physically consistent samples in terms of the residual. For the 2d grid of latent variables in Fig.~\ref{fig:lat_traversal_darcy}, the mean and median residual MSE were $3.614$ and $3.467$, respectively. 
This demonstrates that traversing the latent space helps to generate physically accurate samples that share semantic similarities.

\section{\texttt{Latent-CFM} joint training results on ImageNet-256}
\label{sec:jont_imnet}
In this section, we compare the two training strategies of \texttt{Latent-CFM}, (1) with a \textbf{pretrained} VAE encoder with the final layer parameters $\lambda$ updated with the CFM training, and (2) an encoder \textbf{jointly trained} with the learned vector field on the ImageNet-256 dataset. For both models, we use the same architectures for VAE \cite{rombach2022highresolutionimagesynthesislatent}, and the vector field network \cite{tong2024improving}. Other hyperparameters, such as per-GPU batch size and learning rate, follow from \cite{ma2024sitexploringflowdiffusionbased} and are kept the same for both models. 

\texttt{Latent-CFM} with a pretrained encoder significantly improves ($\sim 2.4$ steps/second) the training speed from the jointly trained model ($\sim 1.3$ steps/second). Table~\ref{tab:joint_pretrain} shows the FID over training steps for both models. We observe that \texttt{Latent-CFM} with a pretrained encoder outperforms the jointly trained model across all training steps.

\begin{table}[!htp]
\centering 
\begin{tabular}{|c|c|c|}
   \hline
    \textbf{Steps} & \texttt{Latent-CFM} (pretrained) FID ($\downarrow$) & \texttt{Latent-CFM} (joint) FID ($\downarrow$)\\
	\hline
    200K & \textbf{11.781} & 44.259\\
    400K & \textbf{7.324} & 27.186\\
    600K & \textbf{6.450} & 20.608\\
   \hline
   \end{tabular}
   \caption{Comparison between \texttt{Latent-CFM} with a \textbf{pretrained} and \textbf{jointly trained} encoder in terms of FID on ImageNet-256. \texttt{Latent-CFM} with a pretrained encoder performed significantly better than the jointly trained model across all training steps.}
    \label{tab:joint_pretrain}
\end{table}

\section{Computational cost}

For our experiments, we used between 1, 4, and 8 NVIDIA A100 GPUs to train all models. On MNIST, CIFAR10, and Darcy Flow datasets, it took approximately $3$, $16$, and $3.5$ hours to complete $100K$, $600K$, and $100K$ steps of the \texttt{Latent-CFM} training, respectively. On ImageNet-256, it took $4.5$ days to train \texttt{Latent-CFM} with a pretrained encoder model for $800K$ steps. On Darcy Flow and ImageNet data, we pretrained a VAE model, which took approximately $1.7$ and $5.5$ hours to complete $100K$ and $200K$ training steps, respectively.

\section{Ablation Studies} In this section, we present ablation studies for \texttt{Latent-CFM} on CIFAR10 data with respect to (1) encoder architecture, (2) latent space dimension, and (3) finetuning strategy.

\paragraph{VAE Architectures}
We ran an ablation study comparing the generation quality of $\texttt{Latent-CFM}$ with two pretrained VAE architectures on the CIFAR10 dataset. In the paper, we showed results for $\texttt{Latent-CFM}$ with a frozen $\sim 20$M parameter resnet18 VAE model\footnote{https://github.com/Lightning-Universe/lightning-bolts/}. In addition, we pretrained a $\sim 26$M CNN VAE model following the Stable-Diffusion repository \cite{rombach2022highresolutionimagesynthesislatent} for 100K steps. We fixed all other $\texttt{Latent-CFM}$ parameters except the frozen VAE encoder. Table~\ref{tab:fid_results} shows the FID from both models for three training checkpoints on CIFAR10. We observe that both models perform similarly across the training checkpoints, with LCFM-CNN achieving slightly better FID after 600K steps.

\begin{table}[htbp]
\centering
\begin{tabular}{lcc}
\toprule
Training steps & FID LCFM-resnet18 ($\downarrow$) & FID LCFM-CNN ($\downarrow$) \\
\midrule
300K & \textbf{3.566} & 3.572 \\
400K & \textbf{3.457} & 3.481 \\
600K & 3.514 & \textbf{3.506} \\
\bottomrule
\end{tabular}
\caption{FID comparison across training steps}
\label{tab:fid_results}
\end{table}

\begin{table}[htbp]
\centering
\begin{tabular}{lc}
\toprule
Latent dimension & FID ($\downarrow$) \\
\midrule
16  & 3.916 \\
32  & 3.747 \\
64  & \textbf{3.514} \\
128 & 3.584 \\
256 & \textbf{3.517} \\
512 & 3.556 \\
\bottomrule
\end{tabular}
\caption{FID across latent dimensions}
\label{tab:latent_fid}
\end{table}

\paragraph{Latent dimension }
We ran an ablation study for $\texttt{Latent-CFM}$ w.r.t the latent dimension on the CIFAR10 dataset. Table~\ref{tab:latent_fid} shows FIDs from varying the dimension of the trainable stochastic layer in $\texttt{Latent-CFM}$ within the range 16-512. We observe that FID increases for small latent spaces ($\leq32$). The larger latent spaces tend to have similar FIDs ($\sim 3.5$).  

\paragraph{VAE finetuning with an adapter}
In the paper, to train $\texttt{Latent-CFM}$ model with a pretrained VAE encoder, we freeze the encoder parameters and add a trainable MLP head to map to the mean and variance of the latent space. However, we can choose a different strategy where we add a low-rank adapter (LoRA) \cite{hu2021loralowrankadaptationlarge} to the final layer of the VAE encoder and train only the adapter parameters. Table~\ref{tab:fid_mlp_lora} shows the FID resulting from $\texttt{Latent-CFM}$ training with the two finetuning strategies on CIFAR10. Both strategies introduce very few ($<1\%$) parameters to the model. We observe that the trainable MLP head for the frozen VAE encoder produces lower FID across training steps on CIFAR10.

\begin{table}[htbp]
\centering
\begin{tabular}{lcc}
\toprule
Training steps & FID LCFM w MLP finetuning ($\downarrow$) & FID LCFM w LoRA ($\downarrow$) \\
\midrule
300K & \textbf{3.566} & 3.713 \\
400K & \textbf{3.457} & 3.727 \\
600K & \textbf{3.514} & 3.743 \\
\bottomrule
\end{tabular}
\caption{FID comparison with MLP finetuning and LoRA}
\label{tab:fid_mlp_lora}
\end{table}

\begin{figure*}[htp]
\centering
\begin{subfigure}[b]{0.6\textwidth} 
    \centering
    \includegraphics[width=\textwidth]{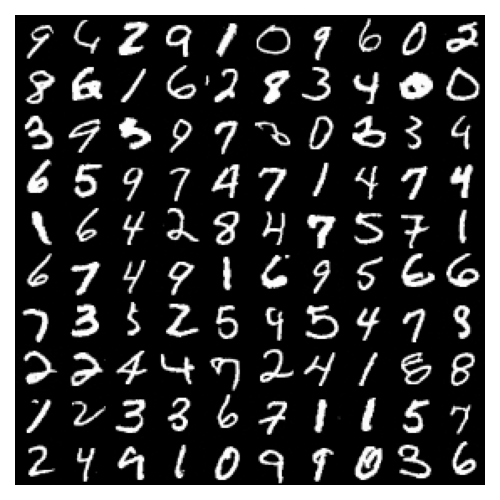}
    \caption{Generated images for MNIST using \texttt{Latent-CFM}}
    \label{fig:mnist_sample}= 
\end{subfigure}
\hfill
\begin{subfigure}[b]{0.6\textwidth} 
    \centering
    \includegraphics[width=\textwidth]{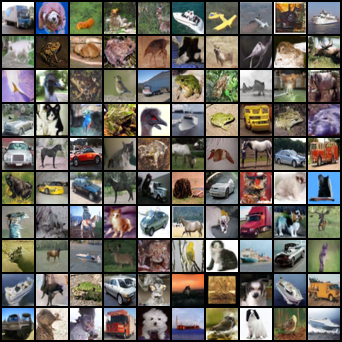}
    \caption{Generated images for CIFAR10 using \texttt{Latent-CFM}}
    \label{fig:cifar_sample} 
\end{subfigure}
    \label{fig:mnist_unconditional}
\end{figure*}

\end{document}